\documentclass{article}

\PassOptionsToPackage{numbers}{natbib}

 \usepackage[preprint]{neurips_2025}

\usepackage{algorithm}
\usepackage{algpseudocode}
 \usepackage{graphicx}      
  \usepackage{subcaption}    
\usepackage{xcolor}
\usepackage{multirow}
\usepackage[most]{tcolorbox}
\usepackage{xcolor}
\usepackage{microtype}

\usepackage{amsmath}
 \usepackage{graphicx}
 \usepackage[utf8]{inputenc} 
\usepackage[T1]{fontenc}    
\usepackage{hyperref}       
\usepackage{url}            
\usepackage{booktabs}       
\usepackage{amsfonts}       
\usepackage{nicefrac}       
\usepackage{microtype}      
\usepackage{xcolor}         
\usepackage{multirow}
\usepackage{amssymb}
\usepackage{float}
\usepackage{wrapfig}

\usepackage{amsthm}
\usepackage{tcolorbox}
\tcbuselibrary{skins, breakable}
\usepackage{listings}
\usepackage{xcolor}

\newtheorem{prop}{Proposition}
\newtheorem*{prop*}{Proposition}
\newtheorem{lem}{Lemma}

\newcommand\blfootnote[1]{%
  \begingroup
  \renewcommand\thefootnote{}\footnote{#1}%
  \addtocounter{footnote}{-1}%
  \endgroup
}
\usepackage{xcolor}

\title{AgentKVShift: Efficient KV Cache Reuse for Agentic Memory Systems}

\author{%
  Nilesh Prasad Pandey$^\dagger$ \quad
  Jason Kong \quad
  Lanxiang Hu \quad
  Quanling Zhao \\
  \textbf{Yujie Zhao} \quad
  \textbf{Onat Gungor} \quad
  \textbf{Hao Zhang} \quad
  \textbf{Tajana Rosing} \\
  University of California, San Diego \\
}

\begin{document}
\maketitle
\blfootnote{$^{\dagger}$Corresponding author: \texttt{nppandey@ucsd.edu}}




\begin{abstract}
Memory-augmented LLM agents have gained substantial attention recently for their ability to maintain context across hundreds of interactions through agentic memory systems that actively curate retrieved content with LLM-generated metadata such as summaries, keywords, and tags. However, from an inference cost standpoint, every retrieval triggers a full
re-encoding of these structured memory units into Key-Value (KV) states, which dominates prefill latency. Existing training-free KV reuse methods mitigate this by selectively recomputing a small fraction of tokens, but were designed for RAG-style raw passages and degrade substantially on structured agentic memories. In this work, we present AgentKVShift, a training-free, probe-guided KV residual correction method that operates per retrieved memory unit. One of the crucial insights we demonstrate is that the per-memory KV reuse residual decomposes into a shared memory-level offset plus small token-wise fluctuations. Estimating this offset from a small probe set allows us to correct every
reused token in the memory unit by a single weighted correction. Unlike prior reuse methods which decide which tokens to recompute and leave the rest of the cache stale, AgentKVShift also corrects the tokens it does not recompute, turning the refresh budget into useful signal across the entire chunk. Through extensive experiments across four open source LLMs spanning 3B to 32B parameters and two long-horizon agentic memory benchmarks covering long-term dialogue and agentic applications, we show that AgentKVShift achieves near full recompute performance while refreshing only 10--30\% of the cache,
outperforming existing baselines at the same recompute ratio. AgentKVShift requires up to 5$\times$ lower recompute to reach this near-full performance, which prior reuse methods only attain at 45--55\% refresh. By operating in this lower recompute regime, AgentKVShift delivers prefill speedups of 2--3.5$\times$ over no-KV-reuse on a single A100 GPU. Lastly, AgentKVShift orthogonally composes with KV cache quantization, retaining over 2$\times$
the F1 of prior reuse methods under aggressive 2- and 4-bit settings, making it a simple, yet effective choice for serving long-horizon agentic memory workloads.
\end{abstract}

\section{Introduction}

Memory-augmented LLM agents, such as personal assistants, coding agents, and 
enterprise knowledge systems, have gained substantial attention recently. 
These agents maintain context across hundreds of interactions through memory 
systems that selectively retain, organize, and retrieve information over 
extended periods~\cite{episodic_memory}. While initial progress relied on 
Retrieval-Augmented Generation (RAG)~\cite{rag}, which retrieves raw 
conversation passages, the field has rapidly shifted toward 
\textit{agentic memory} systems~\cite{memoryr1, mem1, xu2025mem, gmemory} that 
actively curate knowledge through learned or heuristic-driven policies. 
This shift introduces a fundamental change in the nature of the retrieved 
context: rather than raw text, the retrieved representations are enriched 
with LLM-generated metadata such as summaries, keywords, tags, and relational 
annotations. Crucially, 
this metadata is not overhead, it is what the model primarily attends to 
during answer generation.

A key computational challenge for such memory systems is the prefill cost. 
Every time an agent retrieves stored memory, it must encode the text into 
Key-Value (KV) states, a computation that scales linearly with context length 
and dominates inference latency for long-context workloads~\cite{agrawal2024taming, kwon2023efficient}. Considerable 
effort has gone into reducing this cost through KV cache reuse, ranging from 
prefix caching in modern serving frameworks, vLLM~\cite{kwon2025vllm} and SGlang~\cite{zheng2024sglang}, to more 
targeted approaches that selectively recompute a small fraction of KV states 
under dynamic retrieval~\cite{cacheblend, wang2026prophetkv}. These methods have 
proven effective on RAG workloads, where retrieved passages are raw text and 
the dominant source of stale KV states is well-localized to a small subset 
of tokens. 

However, agentic memory presents a structurally different setting, and we 
find that existing reuse methods fail to transfer to it. The accuracy 
degradation is substantial and consistent across model scales from 3B to 
32B parameters, across multiple model families (Qwen2.5 \cite{qwen2.5}, Qwen3 \cite{qwen3technicalreport}, Mistral \cite{mistralai2024mistral7binstructv03}), 
and across different agentic memory systems,  
suggesting that the issue is not an artifact of a particular model or 
a memory type but a generalized failure mode of token-selection-based reuse on 
metadata-rich retrieval. Furthermore, there remains little understanding of 
how the token composition of metadata-rich agentic memory affects KV 
divergence patterns and, consequently, the token-selection strategies that 
underlie current reuse methods.

In view of the above, we seek to address the following key questions: (1) Do existing KV cache reuse methods generalize to agentic memory systems? (2) If they degrade, does the KV reuse error have a shared structure across tokens that current reuse methods cannot fix? (3) Can we fix this failure with a simple, training-free correction?

To this end, we propose \textit{AgentKVShift}, a training-free, probe-guided KV 
residual correction that operates per retrieved memory unit. The crucial insight 
we demonstrate is that the per memory unit KV reuse residual decomposes into a 
shared chunk-level offset plus small token-wise fluctuations, estimating 
the offset from a small probe set allows us to correct every other token 
in the chunk by a single weighted vector addition, recovering near-fresh 
attention outputs without re-encoding the memory. Unlike prior reuse 
methods~\cite{cacheblend,wang2026prophetkv} which decide \emph{which} tokens to recompute and leave the rest 
of the cache stale, AgentKVShift also corrects the keys and values of 
the tokens it does not recompute, turning the refresh budget into useful 
signal across the entire chunk.

Overall, we make the following key contributions:
\begin{enumerate}
    \item We show that existing training-free KV cache reuse methods 
          degrade substantially on agentic memory retrieval, with the 
          accuracy gap persisting across three model families, scales 
          from 3B to 32B, and both note- and graph-based memory systems.
    \item Through a per-memory spectral analysis, we identify the 
          structural cause: the KV reuse residual is dominated by a 
          shared memory-level offset that token-selection methods leave 
          intact in unrefreshed tokens. Motivated by this, we propose 
          \textit{AgentKVShift}, a training-free correction that estimates 
          the offset from a small probe set and applies a weighted 
          mean-shift to all reused tokens. Under standard sub-Gaussian 
          assumptions, AgentKVShift admits a tighter attention-error bound 
          than uncorrected reuse when the memory-level bias dominates 
          the probe estimation error.
    \item On diverse long-horizon agentic memory benchmarks~\cite{maharana2024evaluating,zhao2026ama}, AgentKVShift 
          narrows the gap to full recompute to within $1.5-6\%$ 
          relative F1 while refreshing only $10\%$ of the cache, 
          attaining prefill speedups of $2-3.5\times$ on a single A100 
          GPU. Also, under state-of-the-art 2- and 4-bit KV quantization~\cite{liu2024kivi,su2025accurate}, 
          AgentKVShift retains over $2\times$ the F1 of existing reuse 
          methods at aggressive bit-widths, since the probe estimate 
          absorbs both context drift and average quantization error.
\end{enumerate}

\section{Related Work}
\label{sec:related_work}

\textbf{RAG and Its Limitations for Long-Horizon Agents.}
Retrieval-Augmented Generation (RAG)~\cite{rag} retrieves relevant passages 
from a document store and prepends them to the LLM prompt at inference time, 
relying on the retriever (e.g., BM25~\cite{bm25}, dense retrieval~\cite{karpukhin2020dense, izacard2021unsupervised}) to 
surface relevant content. While effective for single-turn 
knowledge-intensive tasks, RAG has well-documented limitations in 
long-horizon agentic settings~\cite{episodic_memory}. As conversations grow 
to hundreds of turns, the raw passage store becomes noisy, retrieval 
recall degrades, and the retrieved context lacks the structure needed for 
multi-hop reasoning across sessions. These limitations have motivated the 
shift toward agentic memory systems that actively curate and organize 
knowledge.

\textbf{Agentic Memory Systems.}
A growing body of work has developed agentic memory systems that retain 
the retrieve-and-condition paradigm of RAG but replace raw passage stores 
with structured, LLM-curated representations. Although the systems vary 
in their specific organizational primitives, two dominant categories have 
emerged. \textit{Note-based} systems~\cite{xu2025mem} generate Zettelkasten-style 
notes enriched with summaries, keywords, and tags, building on a longer 
line of work in dialogue compression and fact-extraction memory that 
distill interactions into compact, self-contained 
units~\cite{lee2024human, zhong2024memorybank,packer2023memgpt}. On the other hand, \textit{Graph-based} 
systems~\cite{huang2025licomemory} organize memories into hierarchical or relational 
graphs with temporal indexing, drawing on prior work in graph-augmented 
retrieval~\cite{rasmussen2025zep,chhikara2025mem0}. At inference time, both categories operate by retrieval. The agent's query 
is matched against the curated memory store, and the retrieved units are 
passed to the LLM as context. The difference from RAG lies in what gets 
retrieved, not raw passages, but LLM-curated representations such as 
notes, summaries, tags, and graph nodes. Despite their diversity, these systems 
share a common property, they excel at determining \textit{what} to 
remember, but treat the underlying KV computation as a black box, 
requiring full re-encoding on every retrieval and leaving the prefill cost 
unaddressed.

\textbf{KV Cache Reuse for Dynamic Retrieval.}
Reducing prefill computation through KV cache reuse has been studied at 
multiple levels. At the serving level, prefix caching in vLLM~\cite{kwon2025vllm} 
and RadixAttention in SGLang~\cite{zheng2024sglang} cache KV states for shared 
prompt prefixes using block-level hashing or radix tree structures. These 
methods are effective when prompts share long, stable prefixes (e.g., 
system prompts, few-shot examples), but offer limited benefit when both 
the content and the ordering of the retrieved context vary per request, as is typical in retrieval- and memory-augmented agents, where each query 
retrieves a different set of memories arranged in a different order, 
breaking the prefix-caching assumptions. For dynamic retrieval settings, more targeted 
training-free methods have been proposed. CacheBlend~\cite{cacheblend} 
identifies high-deviation KV tokens at a check layer and selectively 
recomputes only those, achieving near-baseline accuracy at recompute 
ratios of $r = 0.10$--$0.20$ on RAG benchmarks. More recently, 
ProphetKV~\cite{wang2026prophetkv} addresses a limitation of divergence-based 
selection within RAG by replacing the divergence criterion with a 
query-driven one. Despite this progress, all of 
the selective-recomputation methods above were designed and evaluated on 
RAG or standard generation workloads. Our work shows that the structural properties of the KV reuse error in agentic memory call for a different 
correction strategy, one that adjusts the values of unrefreshed tokens 
rather than only choosing which tokens to refresh~\cite{cacheblend,wang2026prophetkv} and adjusting its positional embedding~\cite{gim2024prompt}.

\section{Proposed Approach}
\label{sec:method}

\setlength{\abovedisplayskip}{3pt}
\setlength{\belowdisplayskip}{3pt}
\setlength{\abovedisplayshortskip}{2pt}
\setlength{\belowdisplayshortskip}{2pt}

\subsection{Motivation for KV Reuse}
\label{sec:motivation}
\begin{figure}[t]
    \centering
    \includegraphics[height=4cm]{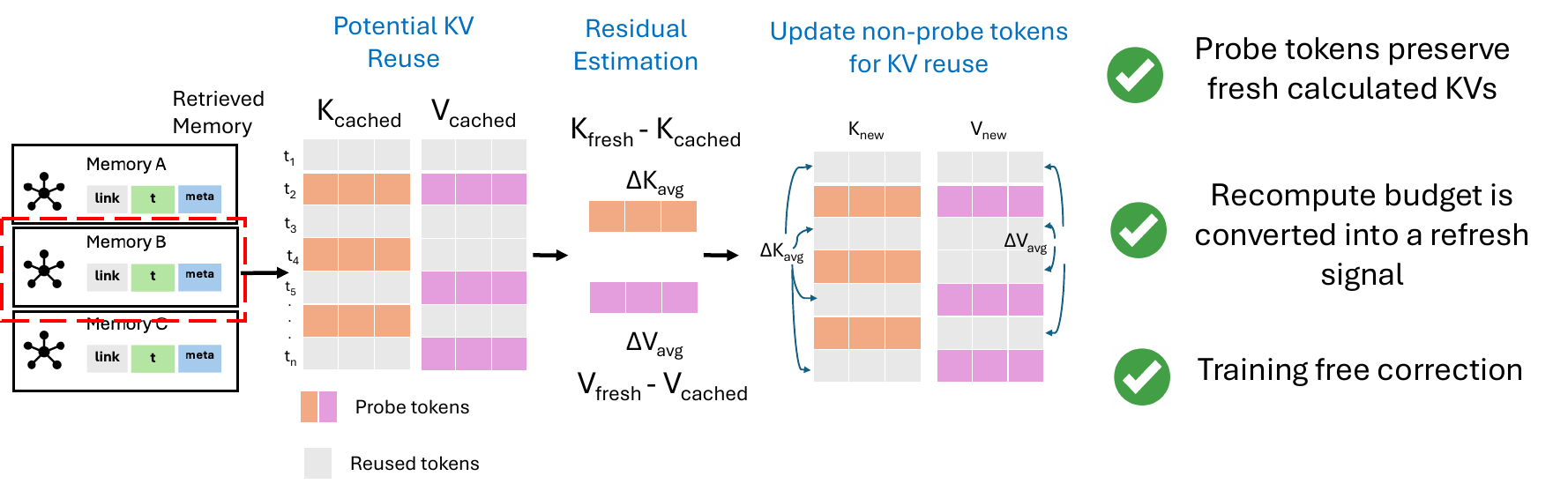}
    \caption{AgentKVShift: KV reuse via residual estimation. The mean residuals $\Delta K_\text{avg}$ and $\Delta V_\text{avg}$ are estimated using probe tokens are applied to non-probe tokens to produce $K_\text{new}$ and $V_\text{new}$, approximating full recomputation.}
\end{figure}

KV caching is an established technique for accelerating LLM prefill and inference. Precomputed key/value states for repeated context can be loaded directly instead of being recomputed at every step. However, in agentic memory retrieval, this assumption breaks. Specifically, a memory unit is encoded once when the agent first observes it, but is later retrieved under a different query, instruction, or dialogue history. The tokens are the same, but their contextual representations are not.

To formalize this mismatch,
Let $\mathcal{C}$ denote the set of retrieved memory chunks, and let $C\in\mathcal{C}$ denote one retrieved memory chunk with $n$
tokens with head dimension $d$, processed through a transformer with $L$ layers. At layer $\ell$, direct reuse substitutes the cached states $K^\ell_{\mathrm{reuse}} = K^\ell_{\mathrm{cached}}$ and $V^\ell_{\mathrm{reuse}} = V^\ell_{\mathrm{cached}}$ in place of the fresh states $K^\ell_{\mathrm{fresh}}, V^\ell_{\mathrm{fresh}} \in \mathbb{R}^{n\times d}$ under the current context, introducing residuals $R_K^\ell = K^\ell_{\mathrm{fresh}} - K^\ell_{\mathrm{reuse}}$ and $R_V^\ell = V^\ell_{\mathrm{fresh}} - V^\ell_{\mathrm{reuse}}$ that distort attention and degrade generation quality even when the retrieved memory is semantically relevant. Our goal is therefore to approximate $K^\ell_{\mathrm{fresh}}, V^\ell_{\mathrm{fresh}}$ while recomputing only a small fraction of tokens per memory. To this end, we first study the structure of $R_K^\ell, R_V^\ell$ in section~\ref{sec:spectral} and then propose our correction scheme in section~\ref{sec:correction}.

\newtcolorbox{obsbox}{
    colback=gray!5,
    colframe=black,
    boxrule=0.6pt,
    arc=1pt,
    left=6pt, right=6pt, top=3pt, bottom=3pt,
    breakable
}

\subsection{Residual Structure via Spectral Analysis}
\label{sec:spectral}
\begin{wrapfigure}[10]{r}{0.5\linewidth}
    \vspace{-0.5in}

    \centering
    \includegraphics[width=\linewidth]{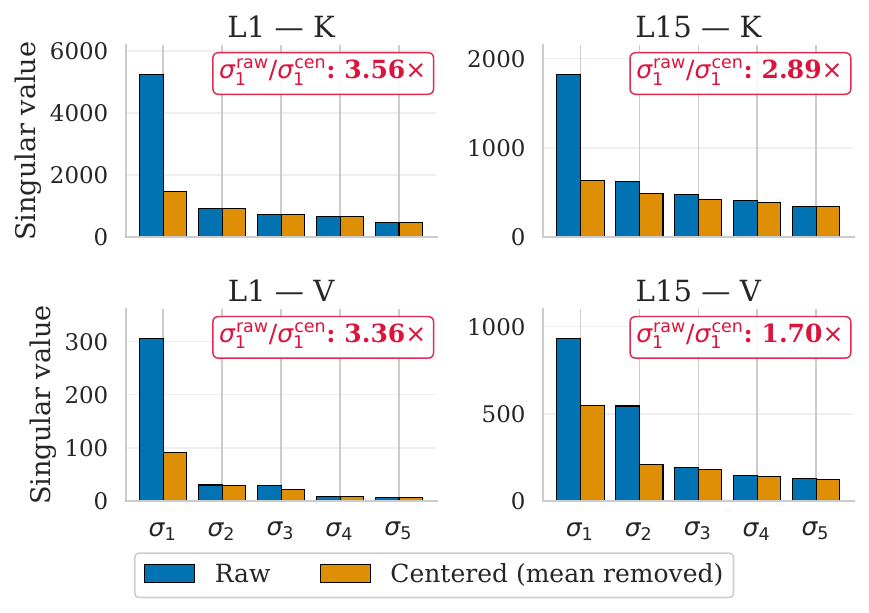}
    \caption{\small SVD spectra of $R^\ell$ for K and V.}
    \label{fig:svd}
\end{wrapfigure}
To understand the structure of the reuse error, we analyze the memory-wise KV residuals $R_K^\ell$ and $R_V^\ell$ at different transformer layers. We compute their singular value spectra and contrast them against the spectra obtained after subtracting the chunk mean from each row (Fig.~\ref{fig:svd}).

\noindent\textbf{Sharp spectral concentration.} For both $R_K^\ell$ and $R_V^\ell$, the leading singular value dominates the remaining modes across layers, indicating that the reuse error carries structured signal rather than arbitrary token-wise noise.

\noindent\textbf{Mean-centering removes the dominant mode.} Subtracting the chunk mean from each token's KV substantially reduces the leading singular value, especially for K. A large fraction of the reuse error is therefore captured by a single shared chunk-level offset, while the remainder corresponds to smaller token-wise fluctuations.

Together, these observations motivate our method: estimate that offset from a small probe set and shift the reused cache accordingly, enabling KV cache reuse.

\subsection{Probe-Guided Residual Correction}
\label{sec:correction}

Building on the above analysis, we now describe our correction scheme. For each retrieved memory chunk, we recompute fresh KV states for a small probe set of tokens, use them to estimate the chunk-level offset at every layer, and apply this offset to the remaining tokens via a single weighted vector addition. Algorithm~\ref{alg:chunk_residual_correction} summarizes the full procedure. The method has three steps, which are described below.

\textbf{Probe selection.} For each chunk $C$, we compute K- and V-specific divergence scores at the first transformer layer $\ell_c=1$, following the cross-layer high-deviation observation in CacheBlend~\citep{cacheblend}. Concretely,
{\small
\begin{equation*}
d_{K,j}^{\ell_c} = \|K_{j,\mathrm{fresh}}^{\ell_c} - K_{j,\mathrm{reuse}}^{\ell_c}\|_2, \qquad d_{V,j}^{\ell_c} = \|V_{j,\mathrm{fresh}}^{\ell_c} - V_{j,\mathrm{reuse}}^{\ell_c}\|_2.
\end{equation*}
}%
The top-$b$ probe sets $S_C^K, S_C^V$ and per-token weights $w_{K,j}=\min(d_{K,j}^{\ell_c},1)$, $w_{V,j}=\min(d_{V,j}^{\ell_c},1)$ are determined once at layer~1 and reused at every subsequent layer. This avoids per-layer probe re-selection and keeps the overhead low. Given recompute ratio $r$, we set the per-chunk probe budget to $b=\lceil r n \rceil$.

\textbf{Mean estimation.} At each layer $\ell$, the memory-level offset is estimated from the probes as
{\small
\begin{equation*}
\hat{\mu}_{K,C}^\ell = \tfrac{1}{|S_C^K|}\!\!\sum_{j\in S_C^K}\!(K_{j,\mathrm{fresh}}^\ell - K_{j,\mathrm{reuse}}^\ell), \qquad \hat{\mu}_{V,C}^\ell \text{ analogously.}
\end{equation*}
}%
Note that although the probe indices are fixed across layers, the estimated offsets $\hat{\mu}_{K,C}^\ell$ and $\hat{\mu}_{V,C}^\ell$ are layer-specific, since the residual structure itself varies with depth.
\begin{algorithm}[t]
\caption{AgentKVShift: Probe-Guided Residual Correction for KV Reuse}
\label{alg:chunk_residual_correction}
\begin{algorithmic}[1]
\Require Chunks $\mathcal{C}$, reused KV $\{K^\ell_{j,\mathrm{reuse}}, V^\ell_{j,\mathrm{reuse}}\}$, probe budget $b$, checkpoint layer $\ell_c$
\ForAll{$C \in \mathcal{C}$}
    \State Compute $d_{K,j}^{\ell_c}, d_{V,j}^{\ell_c}$ for $j\in C$, select top-$b$ probe sets $S_C^K, S_C^V$, set weights $w_{K,j}, w_{V,j}$
    \For{$\ell = 1, \dots, L$}
        \State Recompute fresh KV for $S_C^K \cup S_C^V$ at layer $\ell$
        \State $\hat\mu_{K,C}^\ell \gets \tfrac{1}{b}\sum_{j\in S_C^K}(K^\ell_{j,\mathrm{fresh}} - K^\ell_{j,\mathrm{reuse}})$, and $\hat\mu_{V,C}^\ell$ analogously
        \ForAll{$j \in C$}
            \State $K^\ell_{j,\mathrm{corr}} \gets K^\ell_{j,\mathrm{fresh}}$ if $j\in S_C^K$, else $K^\ell_{j,\mathrm{reuse}} + w_{K,j}\hat\mu_{K,C}^\ell$
            \State $V^\ell_{j,\mathrm{corr}} \gets V^\ell_{j,\mathrm{fresh}}$ if $j\in S_C^V$, else $V^\ell_{j,\mathrm{reuse}} + w_{V,j}\hat\mu_{V,C}^\ell$
        \EndFor
    \EndFor
\EndFor
\State \Return $\{K^\ell_{j,\mathrm{corr}}, V^\ell_{j,\mathrm{corr}}\}$
\end{algorithmic}
\end{algorithm}

\textbf{AgentKVShift correction.} Given the estimated offsets, each non-probe token is corrected by a single weighted vector addition $K_{j,\mathrm{corr}}^\ell = K_{j,\mathrm{reuse}}^\ell + w_{K,j}\hat{\mu}_{K,C}^\ell$, and analogously for V. Probe tokens use their freshly recomputed states directly. The weights $w_{K,j}, w_{V,j}$ ensure that tokens whose cached states already match the fresh states receive a smaller correction, while tokens with high check-layer divergence receive a stronger one.

\subsection{Theoretical Analysis}
\label{sec:theory}

\setlength{\abovedisplayskip}{3pt}
\setlength{\belowdisplayskip}{3pt}
\setlength{\abovedisplayshortskip}{2pt}
\setlength{\belowdisplayshortskip}{2pt}

Our correction method is based on a simple hypothesis about KV reuse error: within a retrieved chunk, the residual is often dominated by a shared chunk-level offset. For clarity, suppress the layer indices. For key and value residuals $r_i^K := k_i^\star-k_i^{\mathrm{reuse}}$ and $r_i^V := v_i^\star-v_i^{\mathrm{reuse}}$ ($*$ denotes fresh recomputed vector). Thus \(r_i^K\) and \(r_i^V\) correspond to rows of $R^\ell_{K}$ and $R^\ell_{V}$, respectively. We write $r_i^K=\mu_K+\xi_i^K$ and $r_i^V=\mu_V+\xi_i^V$, where $\mu_K,\mu_V$ are chunk-level offsets and $\xi_i^K,\xi_i^V$ are token-wise fluctuations with sub-Gaussian scales $\sigma_K,\sigma_V$. This motivates estimating the shared offset and applying an AgentKVShift correction.

Consider a fixed query $q \in \mathbb{R}^d$, and let $y^\star, y^{\mathrm{reuse}}, y^{\mathrm{corr}}$ denote the attention outputs under fresh, reused, and corrected KV with probe set of size $b$ uniformly sampled with replacement. Let $\|v_i^\star\|_2 \le V_{\max}$ for all $i$, and define $C_K = V_{\max}\|q\|_2/(2\sqrt{d})$ and $C_V = 1$. With this setup, and assuming correcting with the unweighted residual mean for K and V, the following result bounds the attention error before and after correction.

\begin{prop*}{\textbf{1 \& 2} (Reuse vs.\ corrected error)}
\label{prop:bounds}
With probability at least $1-\delta$ each,
{\small
\begin{equation*}
\|y^{\mathrm{reuse}} - y^\star\|_2 \le \underbrace{C_K\|\mu_K\|_2 + C_V\|\mu_V\|_2}_{\text{common bias }B} + \underbrace{(C_K\sigma_K + C_V\sigma_V)\sqrt{2\log\tfrac{4n}{\delta}}}_{N_n} \quad \text{(Proof in Appendix~\ref{prop1})}
\end{equation*}
\begin{equation*}
\|y^{\mathrm{corr}} - y^\star\|_2 \le \underbrace{(C_K\sigma_K + C_V\sigma_V)\sqrt{2\log\tfrac{8n}{\delta}}}_{N_n'} + \underbrace{(C_K\sigma_K + C_V\sigma_V)\sqrt{\tfrac{2\log(8/\delta)}{b}}}_{\text{probe error }E_b} \quad \text{(Proof in Appendix~\ref{prop2})}
\end{equation*}
}%
\end{prop*}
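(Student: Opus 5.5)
The plan is to split the attention error into a key-perturbation part and a value-perturbation part, bound each by a Lipschitz estimate, and then control the residual terms with sub-Gaussian concentration. Write $a^\star=\mathrm{softmax}(K^\star q/\sqrt d)$ and $\tilde a$ for the attention weights under the perturbed keys, so that
\[
y-y^\star=\sum_i(\tilde a_i-a^\star_i)v_i^\star+\sum_i\tilde a_i(v_i-v_i^\star).
\]
For the value term, $\tilde a$ lies in the simplex, so its norm is at most $\max_i\|v_i-v_i^\star\|_2$; this gives the constant $C_V=1$. For the key term, the first sum is at most $\tfrac12 V_{\max}\|\tilde a-a^\star\|_1$ after centering the $v_i^\star$, since $\sum_i(\tilde a_i-a_i^\star)=0$. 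The softmax map is $\tfrac12$-Lipschitz from $\ell_\infty$ logits to $\ell_1$ output, or at worst $1$-Lipschitz, and we can absorb the difference into the constant. The logit change of token $i$ is $\langle q,k_i-k_i^\star\rangle/\sqrt d$, which is bounded by $\|q\|_2\max_i\|k_i-k_i^\star\|_2/\sqrt d$. Together these give
\[
\|y-y^\star\|_2\le C_K\max_i\|k_i-k_i^\star\|_2+C_V\max_i\|v_i-v_i^\star\|_2,
\]
with $C_K=V_{\max}\|q\|_2/(2\sqrt d)$. This deterministic lemma is used for both propositions.

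\emph{Proposition 1.} Under reuse, $k_i-k_i^\star=-r_i^K=-(\mu_K+\xi_i^K)$, so the triangle inequality gives $\max_i\|r_i^K\|\le\|\mu_K\|+\max_i\|\xi_i^K\|$, and the same holds for $V$. Next, interpret ``sub-Gaussian scale $\sigma$'' to mean that $\|\xi_i\|_2$ is a norm-sub-Gaussian vector, with $\Pr(\|\xi_i\|>t)\le 2e^{-t^2/2\sigma^2}$. A union bound over the $n$ tokens and the two families (K and V), each given failure budget $\delta/2$, then yields $\max_i\|\xi_i\|\le\sigma\sqrt{2\log(4n/\delta)}$. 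Plugging these into the lemma gives $B+N_n$.

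\emph{Proposition 2.} With the unweighted correction, $k_i^{\rm corr}-k_i^\star=\hat\mu_K-\mu_K-\xi_i^K$. For non-probe tokens this error is at most $\|\hat\mu_K-\mu_K\|+\|\xi_i^K\|$, and probe tokens have zero error. Because the $b$ probes are drawn uniformly with replacement, they are i.i.d. draws, so $\hat\mu_K-\mu_K=\frac1b\sum_{s}\xi_{j_s}^K$ is a mean of $b$ independent sub-Gaussian vectors with scale $\sigma_K/\sqrt b$. To justify this, assume the $\xi$'s are zero-mean within the chunk, so that $\mu$ is the population mean. This gives $\|\hat\mu_K-\mu_K\|\le\sigma_K\sqrt{2\log(8/\delta)/b}$ with probability $1-\delta/4$. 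The token maxima are handled as in Proposition 1, with budget $\delta/4$ per family, which gives $\sqrt{2\log(8n/\delta)}$. A union bound over the four events then gives a total failure probability of $\delta$, and combining everything with the lemma gives $N_n'+E_b$.

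I expect the main obstacle to be the precise definition of the vector sub-Gaussian scale, so that the norm tail matches the dimension-free constant $\sqrt{2\log(\cdot)}$ without a $\sqrt d$ factor. I would simply posit norm-sub-Gaussianity in that form, and for the average of the $b$ probes use the corresponding Hoeffding-type bound for norm-sub-Gaussian vectors. That bound may carry an absolute constant, which would have to be absorbed. A second subtlety is the softmax Lipschitz constant that produces the factor $\tfrac12$ in $C_K$, which is why the centering step is needed.
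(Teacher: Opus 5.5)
\textbf{Overall route.} Your route is the paper's own: the same deterministic lemma (a weight-change term plus a value-change term, with the simplex bound giving $C_V=1$), the triangle inequality on $r_i=\mu+\xi_i$, union bounds with budgets $\delta/2$ and $\delta/4$, and a sample-mean tail for $e_K,e_V$.

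\textbf{The factor $\tfrac12$ in $C_K$.} The genuine gap is the factor $\tfrac12$ in $C_K=V_{\max}\|q\|_2/(2\sqrt d)$. Neither of your two sources for it is valid.
\begin{itemize}
\item Centering the values does not give $\tfrac12V_{\max}$. For $w=\tilde a-a^\star$ with $\sum_i w_i=0$, splitting $w=w^+-w^-$ yields only $\|\sum_i w_i v_i^\star\|_2\le\tfrac12\operatorname{diam}\{v_i^\star\}\,\|w\|_1\le V_{\max}\|w\|_1$. Equality holds when $v^\star_{1,2}=\pm V_{\max}e$.
\item Softmax is not $\tfrac12$-Lipschitz from $\ell_\infty$ logits to $\ell_1$ weights. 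Its Jacobian $\operatorname{diag}(a)-aa^\top$ at $a=(\tfrac12,\tfrac12)$ sends $(1,-1)$ to $(\tfrac12,-\tfrac12)$, so the constant is exactly $1$. The cited $\tfrac12$ is for the same $\ell_p$ norm on both sides.
\end{itemize}
Because $C_K$ is explicit in the statement, there is nothing to absorb the factor into, and the lemma with this constant is simply false. Take $n=2$, $k_i^\star=0$, $k^{\mathrm{reuse}}_{1,2}=\pm\eta\,q/\|q\|_2$ and $v^\star_{1,2}=v^{\mathrm{reuse}}_{1,2}=\pm V_{\max}e$. Then $\|y^{\mathrm{reuse}}-y^\star\|_2=V_{\max}\tanh\epsilon$ with $\epsilon=\eta\|q\|_2/\sqrt d$, whereas the lemma with this $C_K$ asserts at most $V_{\max}\epsilon/2$.

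The same example also refutes Proposition~1 as written when the logarithmic slack is small. Here $\mu_K=\mu_V=0$ and $\|\xi_i^K\|_2\equiv\eta$ (randomize the common sign if you want the $\xi_i^K$ mean-zero as random vectors). So $\sigma_K=\eta/\sqrt{2\ln 2}$ is admissible and $\sigma_V>0$ is arbitrary, giving $B+N_n=\tfrac12V_{\max}\epsilon\sqrt{\log_2(8/\delta)}+C_V\sigma_V\sqrt{2\log(8/\delta)}$. For any $\delta\in(\tfrac12,1)$ and small $\epsilon,\sigma_V$, this lies below $V_{\max}\tanh\epsilon$ with probability one.

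The paper's proof has the same defect: it obtains the $\tfrac12$ by applying the $\tfrac12$-Lipschitz result in the mixed $\ell_\infty\to\ell_1$ setting. The displayed statement of its Lemma~1 in fact carries $\tfrac{2V_{\max}\|q\|_2}{2\sqrt d}$, which is the correct constant. With $C_K=V_{\max}\|q\|_2/\sqrt d$, the rest of your argument goes through unchanged.

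\textbf{The probe term.} You propose to posit $\Pr(\|e_K\|_2\ge t)\le 2e^{-bt^2/(2\sigma_K^2)}$. That is exactly what the paper does (it invokes ``standard concentration of sample means'' without derivation), so here you match it. Be explicit that this is an extra hypothesis, not a consequence of the per-token tail with the same $\sigma_K$, and again that no constant can be absorbed. In $d=1$, a sign variable $\pm\rho$ satisfies the per-token tail with $\sigma_K^2=\rho^2/(2\ln 2)$. Yet the mean of $b$ i.i.d.\ copies is asymptotically $N(0,\rho^2/b)$, whose two-sided tail at $t=4\rho/\sqrt b$ (about $6.3\cdot 10^{-5}$) exceeds $2e^{-bt^2/(2\sigma_K^2)}=2^{-15}\approx 3.1\cdot 10^{-5}$ for large $b$.
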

Thus, before correction the reuse error behaves like $\text{common bias} + \text{token-wise fluctuation}$ and after correction, the error behaves like $\text{probe estimation error} + \text{token-wise fluctuation}$. To compare the two bounds, let $U_{\mathrm{reuse}} = B + N_n$ and $U_{\mathrm{corr}} = N_n' + E_b$ and $U_{\mathrm{reuse}}-U_{\mathrm{corr}} = B - E_b - (N_n'-N_n)$.

\begin{prop*}{\textbf{3} (Sufficient condition for tighter upper bound)}
\label{prop:tighter}
On the event where the bounds in Propositions 1 and 2 both hold:
\begin{equation*}
\small
\begin{split}
\text{If} \quad B > E_b + (N_n'-N_n) \quad \text{then} \quad U_{\mathrm{corr}}<U_{\mathrm{reuse}} \quad \text{(Proof in Appendix~\ref{prop3})}
\end{split}
\end{equation*}
\end{prop*}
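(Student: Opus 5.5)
Proposition~3 is essentially an algebraic rearrangement, so the plan is to work on the event $\mathcal{E}$ where the bounds of Propositions~1 and~2 hold simultaneously. On $\mathcal{E}$, the quantities $U_{\mathrm{reuse}} = B + N_n$ and $U_{\mathrm{corr}} = N_n' + E_b$ are valid upper bounds on $\|y^{\mathrm{reuse}}-y^\star\|_2$ and $\|y^{\mathrm{corr}}-y^\star\|_2$ respectively. The claim, however, concerns only the ordering of the two deterministic numbers $U_{\mathrm{corr}}$ and $U_{\mathrm{reuse}}$. The event is needed only so that the comparison is meaningful as a comparison of error guarantees. By a union bound, $\mathcal{E}$ has probability at least $1-2\delta$, and I would remark on this in passing.

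The key step is to compute the difference directly:
\[
U_{\mathrm{reuse}} - U_{\mathrm{corr}} = (B + N_n) - (N_n' + E_b) = B - E_b - (N_n' - N_n).
\]
The hypothesis $B > E_b + (N_n' - N_n)$ is exactly the statement that this difference is strictly positive, so $U_{\mathrm{corr}} < U_{\mathrm{reuse}}$ follows at once.

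I would then add two sanity remarks.

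\textbf{Sign of the penalty term.} Since $\log(8n/\delta) > \log(4n/\delta)$, we have $N_n' - N_n \ge 0$. Explicitly,
\[
N_n' - N_n = (C_K\sigma_K + C_V\sigma_V)\Bigl(\sqrt{2\log\tfrac{8n}{\delta}} - \sqrt{2\log\tfrac{4n}{\delta}}\Bigr) \le (C_K\sigma_K + C_V\sigma_V)\,\frac{\log 2}{\sqrt{2\log(4n/\delta)}}.
\]
The inequality uses $\sqrt{x}-\sqrt{y} \le (x-y)/(2\sqrt{y})$. This shows the penalty is a small, $n$-decaying multiple of the fluctuation scale.

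\textbf{Interpretation of the condition.} Because $E_b = O\bigl((C_K\sigma_K + C_V\sigma_V)/\sqrt{b}\bigr)$, the condition reduces to requiring that the common bias $C_K\|\mu_K\|_2 + C_V\|\mu_V\|_2$ exceed a constant times the fluctuation scale. In words, the chunk-level offset must dominate the probe error.

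There is no real obstacle here. The only point needing care is to state clearly that the conclusion is about the upper bounds and not the true errors, so one should not claim $\|y^{\mathrm{corr}}-y^\star\|_2 < \|y^{\mathrm{reuse}}-y^\star\|_2$.
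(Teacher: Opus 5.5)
Your proposal is correct and matches the paper's proof: on the intersection event (probability at least $1-2\delta$ by a union bound), you identify $U_{\mathrm{reuse}} = B + N_n$ and $U_{\mathrm{corr}} = N_n' + E_b$, subtract to get $B - E_b - (N_n'-N_n)$, and read positivity directly off the hypothesis. Your extra remark that $0 \le N_n' - N_n \le (C_K\sigma_K + C_V\sigma_V)\log 2/\sqrt{2\log(4n/\delta)}$ is also correct; it goes slightly beyond the paper, which calls this term lower-order without quantifying it.
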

This means correction yields a strictly tighter high-probability upper bound whenever the common-bias term $B$ exceeds the probe-estimation error term $E_b$, up to the lower-order fluctuation-mismatch term $N_n'-N_n$. Overall, the theory says that naive reuse incurs both common bias and token-wise fluctuation, while probe-guided mean correction removes the common bias and replaces it with a smaller probe-estimation term that decreases with $b$. For completeness, we also empirically justify our structural modeling of residuals (dominated by a common offset) behind the theory and method, and verify whether $\xi_i^K$ and $\xi_i^V$ behave consistently with the sub-Gaussian-style concentration assumption used here. Those results are deferred to Appendix~\ref{mean_explained} and~\ref{fluctuation_validation}.

\section{Experiments}
\subsection{Experimental Setup}
\begin{table*}[t]
  \centering
  \caption{F1, ROUGE and BLEU-1 scores on LoCoMo using recompute ratio, $r{=}0.1$. Best KV-reuse result is \textbf{bold},}
  \label{tab:results}
  \resizebox{\textwidth}{!}{%
  \begin{tabular}{ll ccc ccc}
  \toprule
  & & \multicolumn{3}{c}{\textbf{AMem~\cite{xu2025mem}}} & \multicolumn{3}{c}{\textbf{LicoMemory~\cite{huang2025licomemory}}} \\
  \cmidrule(lr){3-5} \cmidrule(lr){6-8}
  \textbf{Model} & \textbf{Method} & F1 ($\uparrow$) & ROUGE-1 ($\uparrow$) & BLEU-1 ($\uparrow$) & F1 ($\uparrow$) & ROUGE-1 ($\uparrow$) & BLEU-1 ($\uparrow$) \\
  \midrule
  & Full Recompute            & 0.339 & 0.352 & 0.301 & 0.390 & 0.400 & 0.353 \\
  Qwen2.5-3B-Instruct & CacheBlend       & 0.178 & 0.188 & 0.144 & 0.286 & 0.294 & 0.265 \\
  & ProphetKV        & 0.125    & 0.130    & 0.103    & 0.329 & 0.336 & 0.296 \\
  & AgentKVShift (Ours) & \textbf{0.319} & \textbf{0.330} & \textbf{0.277} & \textbf{0.384} & \textbf{0.394} & \textbf{0.349} \\
  \midrule
  & Full Recompute            & 0.444    & 0.471    & 0.407    & 0.446 & 0.464 & 0.414 \\
  Qwen3-4B-Instruct & CacheBlend       & 0.360    & 0.383    & 0.327     & 0.360 & 0.370 & 0.334 \\
  & ProphetKV        & 0.315    & 0.320    & 0.285    & 0.385 & 0.398 & 0.360 \\
  & AgentKVShift (Ours) & \textbf{0.429}     & \textbf{0.463}     & \textbf{0.402}    & \textbf{0.435} & \textbf{0.450} & \textbf{0.401} \\
  \midrule
  & Full Recompute            & 0.305 & 0.346 & 0.295 & 0.509 & 0.524 & 0.456 \\                           
  Mistral-7B-Instruct-v0.3 & CacheBlend       & 0.208 & 0.212 & 0.171 & 0.290 & 0.294 & 0.173 \\  
  & ProphetKV      & 0.170    & 0.169    & 0.145    & 0.357 & 0.369 & 0.231 \\                           
  & AgentKVShift (Ours) & \textbf{0.282} & \textbf{0.294} & \textbf{0.243} & \textbf{0.491} &        
  \textbf{0.508} & \textbf{0.441} \\
  \bottomrule
  \end{tabular}%
  }
\end{table*}
\textbf{Agentic memory Systems.} We evaluate on two recent agentic memory systems that span the
dominant design choices in current memory systems. AMem~\cite{xu2025mem} adopts a Zettelkasten-style organization with 
per-turn notes, summaries, and tags, representing the note-based 
category. LiCoMemory~\cite{huang2025licomemory} organizes memory as a 
hierarchical graph with temporal indexing, representing the graph-based 
category.

\textbf{Benchmarks.} We evaluate on two complementary long-horizon memory benchmarks. LoCoMo~\cite{maharana2024evaluating} is the standard benchmark for long-term memory in \textit{dialogue}, consisting of $10$ long, multi-session conversations with QA testing multi-hop, temporal, and reasoning capabilities. AMA-Bench~\cite{zhao2026ama} is a recent benchmark designed for memory in \textit{agentic applications}, partitioning questions across six domains (Embodied AI, Game, OpenWorld QA, Software, Text2SQL, Web) and four memory capabilities (Recall, Causal Inference, State Updating, and State Abstraction). For our main evaluation on AMA-Bench using AMEM memory, we focus on the AMA-Bench-Recall , which directly probes the chunk-level retrieval fidelity that KV-reuse methods are designed to preserve. For the other three capabilities, which require integrating signals \emph{across} retrieved chunks and depend more on the agent's reasoning pipeline than on the cache itself. For completeness, we also discuss the evaluation on these additional capabilities in Appendix~\ref{app:full_amabench}. 

\textbf{Models.} We evaluate four open-source instruction-tuned LLMs 
spanning two families and a $10\times$ scale range. The 3B--7B models 
(Qwen2.5-3B-Instruct~\cite{qwen2.5}, Qwen3-4B-Instruct~\cite{qwen3technicalreport}, Mistral-7B-Instruct-v0.3~\cite{mistralai2024mistral7binstructv03}) are 
evaluated on LoCoMo. For AMA-Bench, we use Qwen3-32B~\cite{qwen3technicalreport} with the AMem 
agentic memory, as the benchmark's task complexity demands the reasoning 
capacity of larger-scale models.

\textbf{Baselines.} We compare against two state-of-the-art KV reuse methods. CacheBlend~\cite{cacheblend} selectively recomputes high-deviation tokens identified at a check layer, and ProphetKV~\cite{wang2026prophetkv} replaces divergence-based selection 
with a query-driven criterion. We also report the no-KV-reuse (full recompute) upper 
bound as a reference.

\textbf{Evaluation metrics.} We report token-level F1, ROUGE-1, and BLEU-1 against ground truth answers as our primary metrics. On AMA-Bench, we additionally report LLM-Judge accuracy, where GPT-4o \cite{openai2024gpt4o} serves as judge and produces a binary $0/1$ grade against the 
ground-truth answer.

\textbf{Implementation.} All experiments are run on a single NVIDIA A100 GPU, except for Qwen3-32B AMA-Bench evaluation and throughput profiling, which use a single NVIDIA H200 GPU due to memory and compute requirements. For all KV-reuse methods, we use a single check layer at $\ell_c{=}1$ and report results at recompute ratios specified per 
experiment.

\subsection{Long-Horizon Memory Benchmarks}
\label{sec:longhorizon_results}

\subsubsection{LoCoMo Evaluation}



\textbf{AgentKVShift recovers near full-recompute quality at low recompute budget.} As shown in Table~\ref{tab:results}, AgentKVShift recovers near full-recompute quality at $r{=}0.1$. We attribute this to our probe-guided mean-shift correcting the $\sim 90\%$ of unrefreshed tokens as well, extracting useful signal from cache entries that CacheBlend and ProphetKV leave stale.  On AMem with Qwen2.5-3B-Instruct, AgentKVShift reaches $0.319$ F1 against a Full Recompute reference of $0.339$ (a $5.9\%$ relative drop), whereas CacheBlend and ProphetKV drop by $47.5\%$ and $63.1\%$ under the same $10\%$ budget. Similarly, on LiCoMemory 
the gap reduces to $1.5\%$ on Qwen2.5-3B, $2.5\%$ on Qwen3-4B, and $3.5\%$ 
on Mistral-7B.

\textbf{Gains transfer across metrics.} The relative drop to 
Full Recompute stays within $1.5$--$6\%$ across configurations, and the 
same trend carries over to ROUGE-1 and BLEU-1, indicating that our 
correction recovers not only token-level overlap but also longer n-gram 
fidelity.

\subsubsection{AMA-Bench-Recall Evaluation}
\label{sec:amabench_results}

\textbf{AgentKVShift gains the most on domains where retrieval fidelity drives accuracy.} As shown in Table~\ref{tab:domain_typeA}, AgentKVShift achieves the highest F1 on five of six domains and the highest LLM-Judge accuracy on five of six, yielding the strongest weighted averages on both metrics ($0.284$ F1, $0.279$ accuracy). Importantly, the largest absolute gains are observed on OpenWorld~QA and Text2SQL, where the answer-bearing content is concentrated in a small subset of memory units~\cite{zhao2026ama}. We attribute this to our mean-shift correcting all reused KV entries within a memory unit rather than only the small fraction selected by token-deviation criteria, thus precisely matching what concentrated retrieval rewards. In comparison, CacheBlend lags AgentKVShift by $1.4$ F1 points and $3.9$ accuracy points on weighted average, while ProphetKV trails by $0.7$ F1 points and $4.3$ accuracy points.

\textbf{AgentKVShift effectively closes the gap to Full Recompute at $r{=}0.3$.} As shown in Table~\ref{tab:domain_typeA}, AgentKVShift recovers the majority of the Full Recompute F1 across all six domains, closing the gap to $0.012$ F1 points on weighted average ($0.284$ vs.\ $0.296$). In comparison, CacheBlend and ProphetKV leave a substantially larger gap, since neither corrects the chunk-level residual on unrefreshed tokens.

\textbf{Toward state-aware KV reuse.} Our extended results on the cross-chunk reasoning capabilities (Appendix~\ref{app:full_amabench}) show all KV-reuse methods leaving a wider gap to Full Recompute, consistent with these capabilities depending more on the agent's reasoning pipeline than on the cache itself. This points to KV-reuse mechanisms that are aware of the agent's reasoning state, as a promising direction for future research. Nonetheless, AgentKVShift remains the strongest KV-reuse strategy across all reasoning categories.

\begin{table*}[t]
  \centering
  \caption{F1 and binary LLM-Judge accuracy by domain on AMA-Bench-Recall using AMEM memory (Qwen3-32B, $r{=}0.3$, judged by GPT-4o \cite{openai2024gpt4o}). Values in parentheses denote the retention ratio relative to Full KV Recompute (no KV-reuse). Best KV-reuse result is \textbf{bold}, second best is \underline{underlined}. CB = CacheBlend, PKV = ProphetKV, \textbf{AgentKVShift} is our method.}
  \label{tab:domain_typeA}
  \footnotesize
  \setlength{\tabcolsep}{4pt}
  \renewcommand{\arraystretch}{1.05}
  \newcommand{\rv}[2]{#1{\scriptsize\,(#2\%)}}
  \newcommand{\rvb}[2]{\textbf{#1}{\scriptsize\,\textbf{(#2\%)}}}
  \newcommand{\rvs}[2]{\underline{#1}{\scriptsize\,\underline{(#2\%)}}}
  \resizebox{\textwidth}{!}{%
  \begin{tabular}{@{}l c ccc c ccc@{}}
    \toprule
    & \multicolumn{4}{c}{\textbf{F1($\uparrow$)}} & \multicolumn{4}{c}{\textbf{Accuracy($\uparrow$)}} \\
    \cmidrule(lr){2-5} \cmidrule(lr){6-9}
    \textbf{Domain}
      & \textbf{Full Rec.} & \textbf{CB} & \textbf{PKV} & \textbf{AgentKVShift}
      & \textbf{Full Rec. } & \textbf{CB} & \textbf{PKV} & \textbf{AgentKVShift} \\
    \midrule
    Embodied AI  & 0.392 & \rv{0.352}{89.8}  & \rvs{0.372}{94.9}  & \rvb{0.381}{97.2}  & 0.180 & \rvs{0.148}{82.2} & \rvb{0.164}{91.1} & \rvs{0.148}{82.2} \\
    Game         & 0.400 & \rvs{0.385}{96.3} & \rv{0.382}{95.5}   & \rvb{0.391}{97.8}  & 0.375 & \rvs{0.333}{88.8} & \rvs{0.333}{88.8} & \rvb{0.375}{100.0} \\
    OpenWorld QA & 0.336 & \rv{0.205}{61.0}  & \rvs{0.276}{82.1}  & \rvb{0.280}{83.3}  & 0.357 & \rv{0.214}{59.9}  & \rvs{0.245}{68.6} & \rvb{0.327}{91.6} \\
    Software     & 0.149 & \rv{0.150}{100.7} & \rvs{0.151}{101.3} & \rvb{0.156}{104.7} & 0.137 & \rvs{0.160}{116.8}& \rv{0.156}{113.9} & \rvb{0.170}{124.1} \\
    Text2SQL     & 0.342 & \rvs{0.324}{94.7} & \rv{0.318}{93.0}   & \rvb{0.335}{98.0}  & 0.372 & \rvs{0.287}{77.2} & \rv{0.269}{72.3}  & \rvb{0.336}{90.3} \\
    Web          & 0.283 & \rvb{0.273}{96.5} & \rvs{0.270}{95.4}  & \rv{0.260}{91.9}   & 0.304 & \rvs{0.264}{86.8} & \rv{0.248}{81.6}  & \rvb{0.296}{97.4} \\
    \midrule
    \textit{Wtd Avg} & 0.296 & \rv{0.270}{91.2} & \rvs{0.277}{93.6} & \rvb{0.284}{96.0} & 0.287 & \rvs{0.240}{83.6} & \rv{0.236}{82.2} & \rvb{0.279}{97.2} \\
    \bottomrule
  \end{tabular}%
  }
\end{table*}

\subsection{Effect of Recompute Ratio}
\label{sec:recompute_ratio}

Next, we extend our analysis to isolate the effect of the refresh budget. Specifically, we sweep $r \in \{0.05, 0.10, \ldots, 0.95\}$ on LoCoMo with Qwen2.5-3B-Instruct and report F1 in Figure~\ref{fig:sweep_f1}.

As shown, AgentKVShift reaches within $\sim$$2$ points F1 of Full Recompute at $r{=}0.1$ and stays close to the reference throughout. CacheBlend and ProphetKV, in contrast, drop sharply at low $r$ and only match the Full Recompute baseline near $r{=}0.45-0.55$.
\begin{wrapfigure}[12]{r}{0.4\textwidth}
  \vspace{-14pt}
  \centering
  \includegraphics[width=0.4\textwidth]{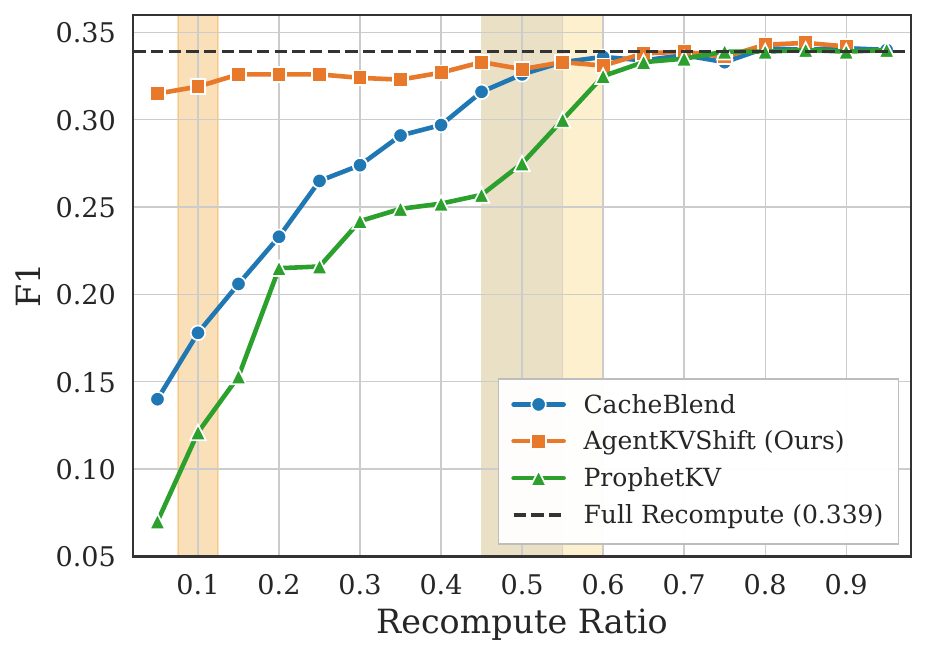}
  \caption{Effect of recompute ratio on F1.}
  \label{fig:sweep_f1}
  \vspace{-10pt}
\end{wrapfigure}

 The same ordering holds for ROUGE-1 and BLEU-1 as shown in Appendix~\ref{app:sweeps}. This reflects a difference in how each method uses the refresh budget.  CacheBlend and ProphetKV select \emph{which} tokens to recompute and leave the rest of the cache 
uncorrected, so their accuracy scales with the fraction of tokens refreshed. AgentKVShift instead corrects the reused keys themselves using a probe-estimated offset, so tokens that are not recomputed still contribute useful signal. As a result, AgentKVShift matches the quality that baselines achieve at $r{=}0.45$--$0.55$ while using $5\times$ fewer recomputations ($r{=}0.10$), translating directly into lower prefill latency at comparable output quality.
 \begin{table*}[t]                                                                                     
      \centering\scriptsize\setlength{\tabcolsep}{3.5pt}\renewcommand{\arraystretch}{0.95}              
      \caption{Speedup over Full KV recompute at $r{=}0.1$, by context length. Best per row in bold.  
  Qwen2.5-3B-Instruct and Qwen3-32B are profiled on a single A100-40GB and H200 respectively.}                                                                    
      \label{tab:speedup_per_toklen}                                                                    
      \begin{tabular}{@{}ll cccc cccc cccc cccc@{}}                                                     
      \toprule                                                                                          
       & & \multicolumn{4}{c}{\textit{2{,}048 tok/req}} & \multicolumn{4}{c}{\textit{4{,}096 tok/req}} &
      \multicolumn{4}{c}{\textit{8{,}192 tok/req}} & \multicolumn{4}{c}{\textit{16{,}384 tok/req}} \\   
      \cmidrule(lr){3-6}\cmidrule(lr){7-10}\cmidrule(lr){11-14}\cmidrule(lr){15-18}                     
       & & $B{=}1$ & 4 & 8 & 16 & $B{=}1$ & 4 & 8 & 16 & $B{=}1$ & 4 & 8 & 16 & $B{=}1$ & 4 & 8 & 16 \\ 
      \midrule                                                                                          
      \multirow{3}{*}{3B}  & CacheBlend & 1.19 & 3.53 & 3.73 & \textbf{4.01} & 2.55 & 3.41 & 3.64 &     
  \textbf{3.69} & 2.74 & 3.17 & 3.17 & \textbf{3.19} & 2.51 & 2.63 & \textbf{2.66} & OOM \\             
                           & ProphetKV  & 1.13 & 3.18 & 3.34 & \textbf{3.48} & 2.28 & 2.93 & 3.12 &
  \textbf{3.21} & 2.14 & 2.59 & 2.65 & \textbf{2.90} & 1.63 & 1.94 & \textbf{2.17} & OOM \\             
                           & AgentKVShift  & 0.94 & 2.97 & 3.31 & \textbf{3.55} & 1.88 & 3.06 & 3.24 &
  \textbf{3.33} & 2.47 & 2.84 & 2.87 & \textbf{2.89} & 2.37 & 2.38 & \textbf{2.39} & OOM \\             
      \midrule    
      \multirow{3}{*}{32B} & CacheBlend & 2.28 & 4.26 & 4.49 & \textbf{4.58} & 3.64 & 3.84 &    
  3.70 & \textbf{3.85} & 2.86 & \textbf{3.30} & 3.23 & 3.24 & \textbf{2.70} & 2.67 & 2.66 & 
  OOM \\                                                                                    
                       & ProphetKV  & 2.03 & 3.34 & 3.89 & \textbf{4.59} & 2.86 & 3.07 &    
  3.59 & \textbf{3.95} & 2.23 & 2.64 & 2.92 & \textbf{2.92} & 1.74 & \textbf{2.04} & 2.03 & 
  OOM \\
                       & AgentKVShift  & 1.53 & 3.62 & 3.95 & \textbf{4.13} & 3.09 & 3.45 &    
  \textbf{3.70} & 3.65 & 2.61 & 2.88 & 3.01 & \textbf{3.02} & \textbf{2.49} & 2.40 & 2.43 & 
  OOM \\             
      \bottomrule                                                                                       
      \end{tabular}                                                                                     
  \end{table*} 

\begin{figure*}[t]
    \centering
    \begin{subfigure}[t]{0.49\textwidth}
        \centering
        \includegraphics[height=4cm,keepaspectratio]{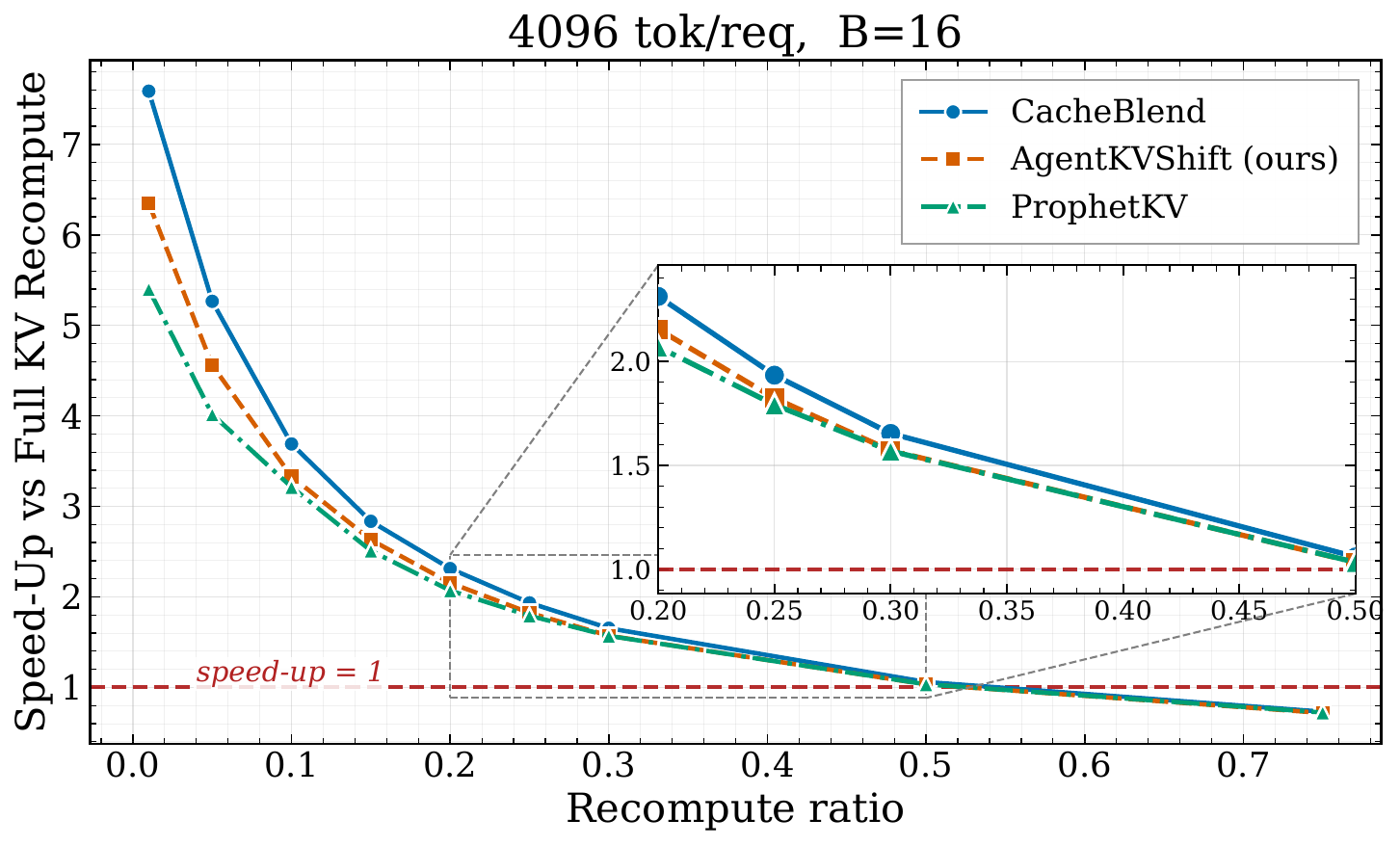}
        \caption{Qwen2.5-3B-Instruct}
        \label{fig:pareto_3b}
    \end{subfigure}\hfill
    \begin{subfigure}[t]{0.49\textwidth}
        \centering
        \includegraphics[height=4cm,keepaspectratio]{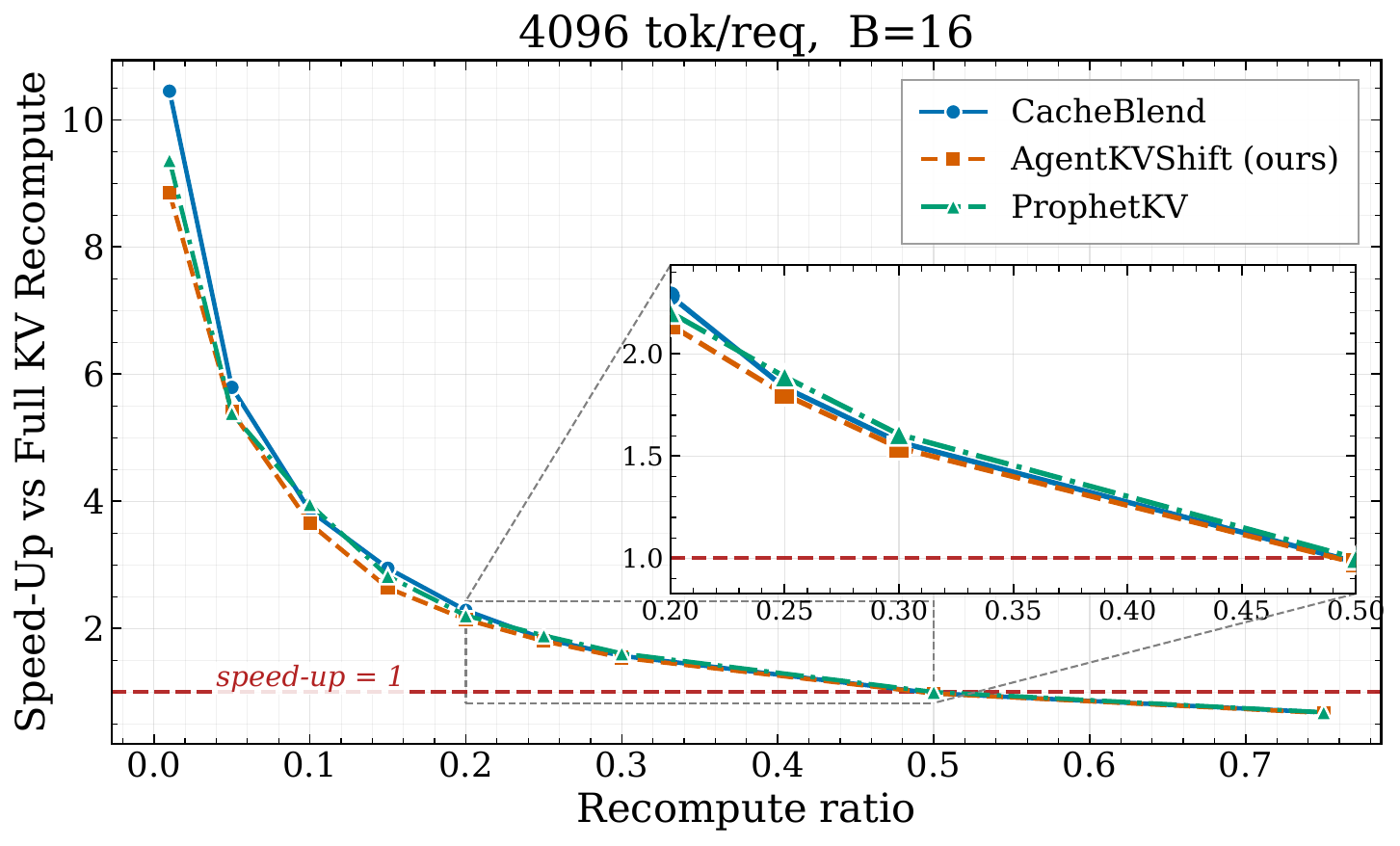}
        \caption{Qwen3-32B}
        \label{fig:pareto_14b}
    \end{subfigure}
    \caption{Speedup vs.\ prefill-latency Pareto sweep at 4{,}096 tok/req $\times$ $B{=}16$
    with $r \in \{0.01, \ldots, 0.75\}$}
    \label{fig:pareto_both}
\end{figure*}

\subsection{Throughput Analysis}
\label{sec:throughput}

LLM serving deployments span a wide range of operating points, where prefill latency depends jointly on context length and batch size, and the optimal KV-reuse strategy can shift as either dimension grows. Therefore, we profile prefill throughput across two model scales (Qwen2.5-3B-Instruct and Qwen3-32B), four context lengths ($2{,}048$--$16{,}384$ tok/req), and four batch sizes ($B \in \{1, 4, 8, 16\}$). Figure~\ref{fig:pareto_both} reports the speedup-vs-recompute-ratio sweep at $4{,}096$ tok/req with $B{=}16$ for $r \in \{0.01, \ldots, 0.75\}$. Whereas, Table~\ref{tab:speedup_per_toklen} fixes $r{=}0.1$ and sweeps the remaining context-length and batch-size configurations. We focus on $r{=}0.1$ and $r{=}0.3$, the operating points at which AgentKVShift preserves answer quality on LoCoMo and AMA-Bench respectively (Section~\ref{sec:longhorizon_results}).

\textbf{AgentKVShift matches the speedup curve of existing methods.} As shown in Figure~\ref{fig:pareto_both}, all three KV-reuse methods trace nearly identical speedup curves on both models. This is expected as at a fixed $r$ the recomputation work is the comparable across methods, and AgentKVShift's correction adds only a memory unit-level vector correction per layer, which is negligible compared to per-token recompute. As shown, AgentKVShift sustains a $\sim$$3.3\times$ prefill speedup over full KV recompute at $r{=}0.1$ on Qwen2.5-3B and a $\sim$$3.6\times$ speedup on Qwen3-32B, and roughly $1.5$--$1.6\times$ at $r{=}0.3$.


\textbf{Throughput gains scale with context length, batch size, and model size.} Next, we study these axes because they govern different deployment regimes. Long-context settings dominate prefill compute relative to per-request fixed costs, large-batch increases the GPU utilization of the correction stage, and larger models increase the per-token recompute cost. As shown in Table~\ref{tab:speedup_per_toklen}, at $r{=}0.1$, AgentKVShift's speedup over no-KV-reuse grows along all three axes. For Qwen2.5-3B at $4{,}096$ tok/req, the speedup increases from $1.88\times$ at $B{=}1$ to $3.33\times$ at $B{=}16$, while for Qwen3-32B the it increases from $3.09\times$ to $3.65\times$. At longer contexts, AgentKVShift sustains $2.4\times$--$3.0\times$ speedups across batch sizes on both models, demonstrating that the correction preserves benefits into the long-context regime where prefill cost dominates.

\subsection{Orthogonality to KV Quantization}
\label{sec:orthogonality}
Finally, we show that KV reuse is orthogonal to KV cache quantization. While KV reuse reduces prefill compute by avoiding redundant recomputation, KV quantization methods such as KIVI~\cite{liu2024kivi} and OTT~\cite{su2025accurate} reduce storage and data-movement cost by storing KV states at lower bit-widths, and accelerate inference through the use of appropriate low-precision compute kernels. To demonstrate this orthogonality, we pair CacheBlend, ProphetKV, 
AgentKVShift with five quantization configurations spanning FP8 (E4M3), KIVI 2/4-bit, and OTT 2/4-bit, and sweep $r \in \{0.1, 0.3\}$ on LoCoMo with Qwen2.5-3B-Instruct and using AMem memory. We further show that AgentKVShift consistently outperforms both CacheBlend and ProphetKV when combined with quantization, retaining substantially higher accuracy across all aggressive bit-width settings.

\textbf{AgentKVShift composes with quantization across all configurations.} As shown in Table~\ref{tab:kv_quant_f1}, AgentKVShift sustains the highest F1 under every quantization scheme at $r{=}0.1$ and $r{=}0.3$, while CacheBlend and ProphetKV degrade sharply under aggressive bit-widths (e.g., at KIVI 2-bit, $r{=}0.1$, both baselines collapse 
below $0.08$ F1 whereas AgentKVShift retains $0.203$).

\begin{wraptable}[10]{r}{0.50\textwidth}
\centering
\vspace{-0.15in}
\caption{F1 under KV cache quantization across recompute ratios $r$. CB = CacheBlend, PKV = ProphetKV, AKS = AgentKVShift}
\label{tab:kv_quant_f1}
\renewcommand{\arraystretch}{1}
\setlength{\tabcolsep}{5pt}
\scriptsize
\begin{tabular}{l|ccc|ccc}
\toprule
 & \multicolumn{3}{c|}{$\mathbf{r = 0.1}$} & \multicolumn{3}{c}{$r = 0.3$} \\
\cmidrule(lr){2-4} \cmidrule(lr){5-7}
\textbf{Method} & CB & PKV & AKS & CB & PKV & AKS \\
\midrule
No Quant   & 0.150 & 0.125 & \textbf{0.319} & 0.239 & 0.250 & \textbf{0.324} \\
FP8 (E4M3) & 0.146 & 0.006 & \textbf{0.321} & 0.240 & 0.256 & \textbf{0.323} \\
KIVI 2-bit & 0.076 & 0.012 & \textbf{0.203} & 0.114 & 0.109 & \textbf{0.184} \\
KIVI 4-bit & 0.190 & 0.102 & \textbf{0.277} & 0.277 & 0.239 & \textbf{0.314} \\
OTT 2-bit  & 0.080 & 0.038 & \textbf{0.202} & 0.113 & 0.110 & \textbf{0.166} \\
OTT 4-bit  & 0.196 & 0.100 & \textbf{0.317} & 0.237 & 0.243 & \textbf{0.323} \\
\bottomrule
\end{tabular}
\end{wraptable}

The mechanism follows directly from how the correction is computed: the probe estimate $\hat{\mu}_C^\ell = \tfrac{1}{b}\sum_{j \in S_C}(K_{j,\mathrm{fresh}}^\ell -K_{j,\mathrm{reuse}}^\ell)$ is the gap between fresh and reused keys, so when the cache is quantized, the estimate absorbs both the semantic drift from context changes and the average quantization noise introduced during storage. 
CacheBlend and ProphetKV, by contrast, retain the full quantization error in their reused keys. 

\section{Conclusion}
\label{sec:conclusion}

In this work, we introduced AgentKVShift, a training-free probe-guided KV residual correction method designed for agentic memory systems for long-horizon LLM agents. Across LoCoMo and AMA-Bench-Recall,
spanning both note- and graph-based memory systems and model scales from 3B to 32B parameters, AgentKVShift achieves near full-recompute quality, within 1.5--6\% relative F1 on LoCoMo and within 3\%
LLM-judge accuracy on AMA-Bench-Recall, while refreshing only 10\% and 30\% of the cache, respectively, with prefill speedups reaching 2--3.5$\times$ over no-KV-reuse within this
operating range. AgentKVShift also outperforms CacheBlend and ProphetKV across all tested configurations, including under aggressive 2- and 4-bit KV quantization where it retains over
2$\times$ their F1. By estimating a shared memory chunk level offset from a small probe set and applying a single weighted correction to every reused token, AgentKVShift turns the
refresh budget into useful signal across the entire chunk rather than leaving unrefreshed tokens stale. Overall, AgentKVShift offers a simple, yet effective correction method that couples residual structure
with retrieval-time KV reuse agentic memory workloads.

\section*{Acknowledgements}
This work has been funded in part by NSF, with award numbers \#2112665, \#2112167, \#2003279, \#2120019, \#2211386, \#2052809, \#1911095 and in part by PRISM and CoCoSys, centers in JUMP 2.0, an SRC program sponsored by DARPA. We also thank Modal for providing compute credits through their compute grant program, which supported the experiments in this work.
\bibliographystyle{plainnat}   
\bibliography{references}

\newpage 
\appendix

\definecolor{varred}{RGB}{139,0,30}
\newcommand{\var}[1]{{\color{varred}\{#1\}}}
\newtcolorbox{prompttemplate}[1]{
  enhanced, breakable,
  colback=white, colframe=black,
  boxrule=0.6pt, arc=4mm,
  left=10pt, right=10pt, top=12pt, bottom=10pt,
  fontupper=\normalsize,
  attach boxed title to top left={xshift=12pt, yshift=-\tcboxedtitleheight/2},
  boxed title style={
    colback=black, colframe=black,
    arc=2mm, boxrule=0pt,
    left=8pt, right=8pt, top=3pt, bottom=3pt,
  },
  coltitle=white, fonttitle=\bfseries,
  title={#1}
}

\section*{Contents}                           
\label{app:contents}

This appendix provides supplementary results, evaluation details, and
formal analysis supporting the main paper. It is organized into three
parts:

\begin{itemize}
  \item \textbf{Appendix~\ref{app:extended_results} — Extended Experimental Results.}
        Extended experimental setup and reproducibility details
        (\S\ref{app:setup}), the full AMA-Bench breakdown across all
        reasoning capabilities (\S\ref{app:full_amabench}), recompute
        ratio sweeps on ROUGE-1 and BLEU-1 (\S\ref{app:sweeps}), the
        LLM-as-a-Judge prompt used to produce the accuracy scores
        (\S\ref{app:llm_judge}), and results under six KV-cache
        quantization regimes (\S\ref{app:kv_quant}).
 \item \textbf{Appendix~\ref{app:theory} — Theoretical Analysis.}
      Discussions and proofs of the deterministic attention
      perturbation bound (\S\ref{lem1}), the error bounds
      before and after probe-guided correction (\S\ref{prop1},
      \S\ref{prop2}), and the sufficient condition under which
      correction strictly tightens the bound (\S\ref{prop3}).
\item \textbf{Appendix~\ref{emp_val} — Empirical Validation
      of the Modeling Assumptions.} Diagnostics that supports the
      residual decomposition $r_i=\mu+\xi_i$ via mean-explained
      energy and top-singular-value suppression
      (\S\ref{mean_explained}), and the sub-Gaussian fluctuation
      assumption via probe-mean concentration and random-projection
      QQ plots (\S\ref{fluctuation_validation}).
\end{itemize}

\section{Extended Experimental Results}
\label{app:extended_results}

\subsection{Extended Experimental Set-Up and Reproducibility}
\label{app:setup}

\paragraph{Models:}
We evaluate four open-source instruction-tuned models spanning two families and a
$\sim$10$\times$ parameter range. All checkpoints are obtained from the HuggingFace Hub
and listed in Table~\ref{tab:app-models}.

\begin{table}[h]
\centering
\small
\caption{Models}
\label{tab:app-models}
\begin{tabular}{ll}
\toprule
\textbf{Model} & \textbf{HF Identifier} \\
\midrule
Qwen2.5-3B-Instruct       & \texttt{Qwen/Qwen2.5-3B-Instruct} \\
Qwen3-4B-Instruct         & \texttt{Qwen/Qwen3-4B-Instruct-2507} \\
Mistral-7B-Instruct-v0.3  & \texttt{mistralai/Mistral-7B-Instruct-v0.3} \\
Qwen3-32B                 & \texttt{Qwen/Qwen3-32B} \\
\bottomrule
\end{tabular}
\end{table}

\paragraph{Agentic Memory Systems:}
We evaluate against two recent memory systems:
\begin{itemize}
  \item \textbf{A-MEM}~\cite{xu2025mem} --
    Source: \url{https://github.com/agiresearch/a-mem}
  \item \textbf{LiCoMemory}~\cite{huang2025licomemory} --
    Source: \url{https://github.com/EverM0re/LiCoMemory}
\end{itemize}
We use each system's default ingestion pipeline and top-$k$ retrieval with $k=10$, and
run all KV-reuse methods on top of the created memory units without modification to the memory system itself.

\paragraph{Benchmarks:}
\begin{itemize}
  \item \textbf{LoCoMo}~\cite{maharana2024evaluating} -- 
    Source: \url{https://github.com/snap-research/locomo}
  \item \textbf{AMA-Bench}~\cite{zhao2026ama} -- 
    Source: \url{https://huggingface.co/datasets/AMA-bench/AMA-bench}
\end{itemize}

\paragraph{Hardware:}
LoCoMo accuracy and throughput experiments with the 3B--7B models run on a single
NVIDIA A100 80GB. AMA-Bench accuracy experiments and throughput profiling with
Qwen3-32B run on a single NVIDIA H200 due to memory requirements.

\subsection{Extended AMA-Bench Discussion}
\label{app:full_amabench}
\begin{table*}[!htbp]
  \centering
  \caption{F1 and LLM-judge accuracy by domain on AMA-Bench
  (Qwen3-32B, $r{=}0.3$, judged by GPT-4o). Values in parentheses
  denote the retention ratio relative to Full Rec. Best KV-reuse result
  is \textbf{bold}, second best is \underline{underlined}.
  CB = CacheBlend, PKV = ProphetKV, \textbf{AgentKVShift} is our method.}
  \label{tab:domain_f1_judge}
  \footnotesize
  \setlength{\tabcolsep}{4pt}
  \renewcommand{\arraystretch}{1.05}
  \newcommand{\rv}[2]{#1{\scriptsize\,(#2\%)}}
  \newcommand{\rvb}[2]{\textbf{#1}{\scriptsize\,\textbf{(#2\%)}}}
  \newcommand{\rvs}[2]{\underline{#1}{\scriptsize\,\underline{(#2\%)}}}
  \resizebox{\textwidth}{!}{%
  \begin{tabular}{@{}l c ccc c ccc@{}}
    \toprule
    & \multicolumn{4}{c}{\textbf{F1($\uparrow$)}} & \multicolumn{4}{c}{\textbf{Accuracy($\uparrow$)}} \\
    \cmidrule(lr){2-5} \cmidrule(lr){6-9}
    \textbf{Domain}
      & \textbf{Full Rec.} & \textbf{CB} & \textbf{PKV} & \textbf{AgentKVShift}
      & \textbf{Full Rec.} & \textbf{CB} & \textbf{PKV} & \textbf{AgentKVShift} \\
    \midrule
    Embodied AI  & 0.418 & \rv{0.334}{79.9}  & \rvs{0.354}{84.7}  & \rvb{0.376}{89.9}  & 0.122 & \rv{0.064}{52.5}  & \rvs{0.067}{54.9}  & \rvb{0.081}{66.4} \\
    Game         & 0.363 & \rv{0.316}{87.1}  & \rvs{0.337}{92.8}  & \rvb{0.338}{93.1}  & 0.367 & \rv{0.281}{76.6}  & \rvs{0.308}{83.9}  & \rvb{0.322}{87.7} \\
    OpenWorld QA & 0.311 & \rv{0.127}{40.8}  & \rvs{0.196}{63.0}  & \rvb{0.200}{64.3}  & 0.386 & \rv{0.147}{38.1}  & \rvs{0.158}{40.9}  & \rvb{0.278}{72.0} \\
    Software     & 0.175 & \rv{0.176}{100.6} & \rvs{0.179}{102.3} & \rvb{0.183}{104.6} & 0.141 & \rvs{0.162}{114.9} & \rvb{0.167}{118.4} & \rv{0.155}{109.9} \\
    Text2SQL     & 0.298 & \rv{0.240}{80.5}  & \rvs{0.260}{87.2}  & \rvb{0.264}{88.6}  & 0.222 & \rv{0.137}{61.7}  & \rvs{0.154}{69.4}  & \rvb{0.183}{82.4} \\
    Web          & 0.277 & \rv{0.199}{71.8}  & \rvb{0.245}{88.4}  & \rvs{0.231}{83.4}  & 0.349 & \rv{0.218}{62.5}  & \rvs{0.263}{75.4}  & \rvb{0.288}{82.5} \\
    \midrule
    \textit{Wtd.\ Avg.} & 0.302 & \rv{0.231}{76.5} & \rvs{0.259}{85.8} & \rvb{0.263}{87.1} & 0.257 & \rv{0.165}{64.2} & \rvs{0.183}{71.2} & \rvb{0.213}{82.9} \\
    \bottomrule
  \end{tabular}%
  }
\end{table*}

\begin{table*}[!htbp]
  \centering
  \caption{F1 and LLM-judge accuracy by question type on AMA-Bench
  (Qwen3-32B, $r{=}0.3$, judged by GPT-4o). Values in parentheses
  denote the retention ratio relative to Full Rec. Best KV-reuse result
  is \textbf{bold}, second best is \underline{underlined}.
  CB = CacheBlend, PKV = ProphetKV, \textbf{AgentKVShift} is our method.}
  \label{tab:type_f1_judge}
  \footnotesize
  \setlength{\tabcolsep}{4pt}
  \renewcommand{\arraystretch}{1.05}
  \newcommand{\rv}[2]{#1{\scriptsize\,(#2\%)}}
  \newcommand{\rvb}[2]{\textbf{#1}{\scriptsize\,\textbf{(#2\%)}}}
  \newcommand{\rvs}[2]{\underline{#1}{\scriptsize\,\underline{(#2\%)}}}
  \resizebox{\textwidth}{!}{%
  \begin{tabular}{@{}l c ccc c ccc@{}}
    \toprule
    & \multicolumn{4}{c}{\textbf{F1($\uparrow$)}} & \multicolumn{4}{c}{\textbf{Accuracy($\uparrow$)}} \\
    \cmidrule(lr){2-5} \cmidrule(lr){6-9}
    \textbf{Capability}
      & \textbf{Full Rec.} & \textbf{CB} & \textbf{PKV} & \textbf{AgentKVShift}
      & \textbf{Full Rec.} & \textbf{CB} & \textbf{PKV} & \textbf{AgentKVShift} \\
    \midrule
    A.\ Recall            & 0.296 & \rv{0.270}{91.2}  & \rvs{0.277}{93.6}  & \rvb{0.284}{95.9}  & 0.287 & \rvs{0.240}{83.6} & \rv{0.236}{82.2}  & \rvb{0.279}{97.2} \\
    B.\ Causal Inference  & 0.284 & \rv{0.213}{75.0}  & \rvs{0.244}{85.9}  & \rvb{0.252}{88.7}  & 0.277 & \rv{0.146}{52.7}  & \rvs{0.191}{69.0} & \rvb{0.218}{78.7} \\
    C.\ State Updating    & 0.320 & \rv{0.211}{65.9}  & \rvs{0.254}{79.4}  & \rvb{0.263}{82.2}  & 0.252 & \rv{0.119}{47.2}  & \rvs{0.147}{58.3} & \rvb{0.182}{72.2} \\
    D.\ State Abstraction & 0.311 & \rv{0.211}{67.8}  & \rvb{0.254}{81.7}  & \rvs{0.235}{75.6}  & 0.176 & \rv{0.114}{64.8}  & \rvb{0.118}{67.0} & \rvb{0.118}{67.0} \\
    \midrule
    \textit{Wtd.\ Avg.}   & 0.302 & \rv{0.231}{76.5} & \rvs{0.259}{85.8} & \rvb{0.263}{87.1}  & 0.257 & \rv{0.165}{64.2}  & \rvs{0.183}{71.2} & \rvb{0.213}{82.9} \\
    \bottomrule
  \end{tabular}%
  }
\end{table*}

The main text (\S\ref{sec:amabench_results}) focuses on the Recall subset of
AMA-Bench because it cleanly isolates the per-chunk retrieval signal that KV
reuse is designed to preserve. In this appendix, we report the full breakdown
across all four reasoning capabilities, Recall (A), Causal Inference (B),
State Updating (C), and State Abstraction (D), reported both by domain
(Table~\ref{tab:domain_f1_judge}) and by capability
(Table~\ref{tab:type_f1_judge}). Capabilities B--D require integrating
evidence \emph{across} retrieved chunks, and therefore stress KV reuse
beyond chunk-local fidelity.

Across both views, AgentKVShift consistently achieves the best F1 and
LLM-judge accuracy among KV-reuse baselines on the weighted average,
narrowing the gap to full recomputation. The advantage is most pronounced
on capabilities that require cross-chunk integration (Causal Inference and
State Updating), where competing reuse methods (CB, PKV) degrade sharply
relative to the full-recompute reference.

\subsection{Recompute Ratio Sweeps}
\label{app:sweeps}

To complement the F1 results in the main text, we also report ROUGE-1
(Figure~\ref{fig:sweep_rouge1}) and BLEU-1 (Figure~\ref{fig:sweep_bleu1})
across recompute ratios $r$. Both surface-form metrics track the F1 trend
closely: our method approaches the no KV reuse baseline ($r{=}1.0$)
at substantially lower recompute budgets than competing KV-reuse
baselines.

\begin{figure}[H]
  \centering
  \begin{subfigure}[t]{0.48\linewidth}
    \includegraphics[width=\linewidth]{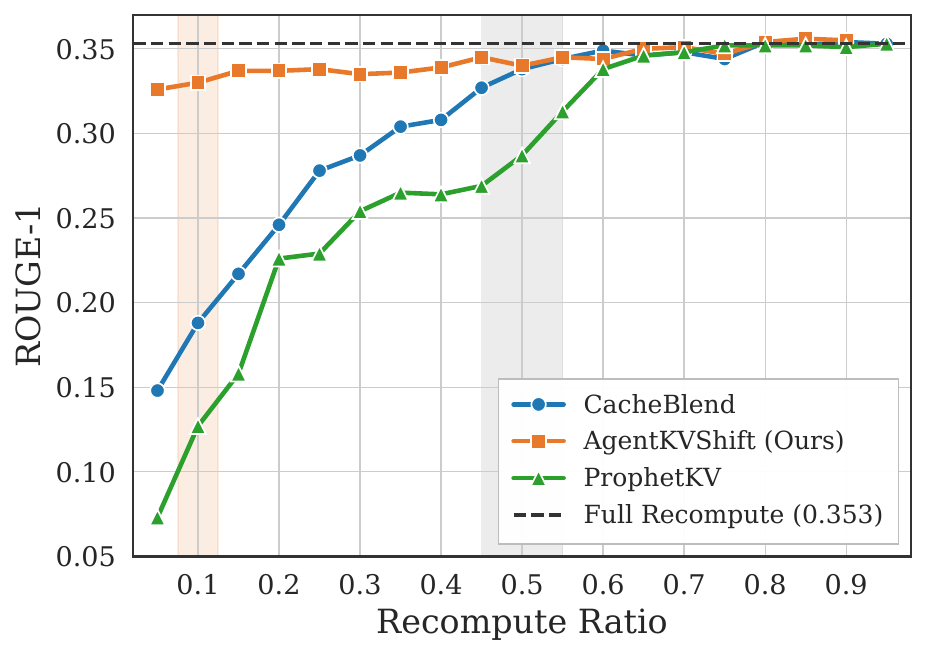}
    \caption{ROUGE-1}
    \label{fig:sweep_rouge1}
  \end{subfigure}\hfill
  \begin{subfigure}[t]{0.48\linewidth}
    \includegraphics[width=\linewidth]{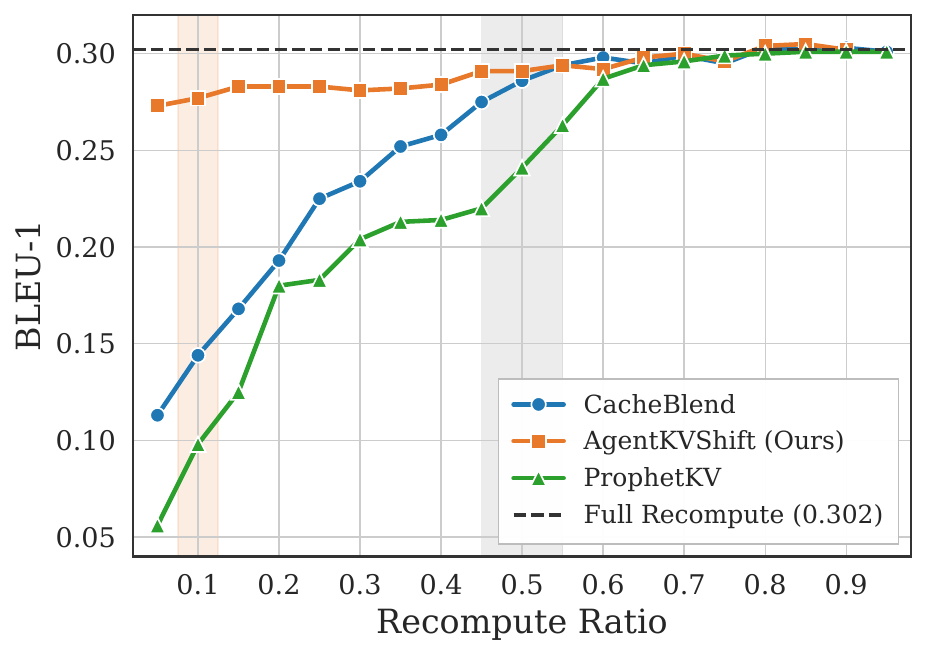}
    \caption{BLEU-1}
    \label{fig:sweep_bleu1}
  \end{subfigure}
  \caption{Recompute-ratio sweep on ROUGE-1 (left) and BLEU-1 (right). The dashed line denotes the no KV reuse baseline.}
  \label{fig:sweep_rouge_bleu}
\end{figure}

\subsection{LLM-as-a-Judge Evaluation Protocol}
\label{app:llm_judge}

For the accuracy column reported in Tables~\ref{tab:domain_f1_judge} and
\ref{tab:type_f1_judge}, we use GPT-4o as an LLM judge, prompted to
compare each model response against a reference answer and emit a
single binary decision. The exact prompt is shown below; placeholders
are highlighted in red.

\begin{prompttemplate}{Prompt template for LLM-as-a-Judge (single-reference, binary).}
\textbf{[Instruction]} Please act as an impartial judge and evaluate the
quality of the response provided by an AI assistant to the user question
below. Your evaluation should consider correctness. You will be given a
reference (golden) answer and the assistant's answer. Begin by comparing
the assistant's answer with the reference answer. Identify and correct
any mistakes. Be as objective as possible.

\smallskip
Your task is to determine if the predicted answer is correct based on:
(1) factual correctness compared to the reference, (2) completeness of
the answer, and (3) relevance to the question.

\smallskip
\textbf{[Question]}\quad \var{question}

\smallskip
\textbf{[Reference Answer]}\quad \var{golden\_answer}

\smallskip
\textbf{[The Start of Assistant's Answer]}\quad \var{response}\quad
\textbf{[The End of Assistant's Answer]}

\smallskip
Is the predicted answer correct? Respond with ONLY ``yes'' or ``no''.
Do not include any explanation.
\end{prompttemplate}

\subsection{Extended Results under KV Cache Quantization}
\label{app:kv_quant}

To assess robustness when KV reuse is composed with KV cache
quantization, Table~\ref{tab:kv-quant-full} reports F1, ROUGE-1, and
BLEU-1 across recompute ratios $r\in\{0.1,0.2,0.3,0.4,0.5\}$ for six
quantization regimes: no quantization, FP8 (E4M3), KIVI 2- and 4-bit,
and OTT 2- and 4-bit. We compare three reuse pipelines: CacheBlend
(CB), our AgentKVShift correction method, and ProphetKV (PKV).

Our method dominates across nearly every (quantization, ratio) cell,
with two consistent patterns. First, the gap between our method and
the next-best baseline is largest at low recompute ratios
($r=0.1, 0.2$), where competing methods degrade sharply but ours
remains close to its full-recompute ceiling. Second, our method is
notably more robust under aggressive quantization: under KIVI 2-bit
and OTT 2-bit at $r=0.1$, our F1 ($\approx 0.20$) more than doubles
that of CacheBlend and is an order of magnitude above ProphetKV,
indicating that probe-guided mean correction substantially mitigates
the compounding errors introduced when reuse and low-bit quantization
are stacked. The only cells where a baseline narrowly edges ours are
at high recompute ratios ($r=0.5$) under 2-/4-bit KIVI, where the
extra fresh KV tokens dominate the error budget.

\begin{table}[!htbp]
\centering
\small
\setlength{\tabcolsep}{4pt}
\renewcommand{\arraystretch}{0.82}
\caption{F1, ROUGE-1, and BLEU-1 across quantization methods,
recompute ratios, and reuse pipelines on the LoCoMo benchmark.
Bold denotes the best result per row.}
\label{tab:kv-quant-full}
\begin{tabular}{ll ccc ccc ccc}
\toprule
& & \multicolumn{3}{c}{\textbf{CacheBlend}} & \multicolumn{3}{c}{\textbf{ProphetKV}} & \multicolumn{3}{c}{\textbf{AgentKVShift}} \\
\cmidrule(lr){3-5}\cmidrule(lr){6-8}\cmidrule(lr){9-11}
\textbf{Quant.} & \textbf{Ratio} & F1 & R-1 & B-1 & F1 & R-1 & B-1 & F1 & R-1 & B-1 \\
\midrule
\multirow{5}{*}{No Quant} & 0.1 & 0.150 & 0.159 & 0.121 & 0.125 & 0.130 & 0.103 & \textbf{0.319} & \textbf{0.330} & \textbf{0.277} \\
 & 0.2 & 0.198 & 0.210 & 0.162 & 0.215 & 0.226 & 0.180 & \textbf{0.326} & \textbf{0.337} & \textbf{0.283} \\
 & 0.3 & 0.239 & 0.252 & 0.201 & 0.250 & 0.263 & 0.213 & \textbf{0.324} & \textbf{0.335} & \textbf{0.281} \\
 & 0.4 & 0.280 & 0.291 & 0.241 & 0.283 & 0.296 & 0.246 & \textbf{0.327} & \textbf{0.339} & \textbf{0.284} \\
 & 0.5 & 0.326 & 0.338 & 0.286 & 0.305 & 0.316 & 0.271 & \textbf{0.329} & \textbf{0.340} & \textbf{0.291} \\
\midrule
\multirow{5}{*}{FP8 (E4M3)} & 0.1 & 0.146 & 0.154 & 0.117 & 0.006 & 0.006 & 0.004 & \textbf{0.321} & \textbf{0.333} & \textbf{0.279} \\
 & 0.2 & 0.196 & 0.208 & 0.160 & 0.199 & 0.207 & 0.165 & \textbf{0.328} & \textbf{0.339} & \textbf{0.284} \\
 & 0.3 & 0.240 & 0.253 & 0.202 & 0.256 & 0.269 & 0.220 & \textbf{0.323} & \textbf{0.334} & \textbf{0.279} \\
 & 0.4 & 0.275 & 0.287 & 0.237 & 0.275 & 0.287 & 0.240 & \textbf{0.324} & \textbf{0.336} & \textbf{0.281} \\
 & 0.5 & 0.325 & 0.336 & 0.285 & 0.297 & 0.308 & 0.261 & \textbf{0.328} & \textbf{0.340} & \textbf{0.290} \\
\midrule
\multirow{5}{*}{KIVI 2-bit} & 0.1 & 0.076 & 0.078 & 0.055 & 0.012 & 0.014 & 0.009 & \textbf{0.203} & \textbf{0.210} & \textbf{0.165} \\
 & 0.2 & 0.094 & 0.099 & 0.072 & 0.072 & 0.076 & 0.055 & \textbf{0.200} & \textbf{0.207} & \textbf{0.166} \\
 & 0.3 & 0.114 & 0.119 & 0.091 & 0.109 & 0.114 & 0.084 & \textbf{0.184} & \textbf{0.190} & \textbf{0.154} \\
 & 0.4 & 0.144 & 0.150 & 0.117 & 0.137 & 0.143 & 0.108 & \textbf{0.188} & \textbf{0.193} & \textbf{0.156} \\
 & 0.5 & \textbf{0.255} & \textbf{0.267} & \textbf{0.217} & 0.169 & 0.175 & 0.138 & 0.244 & 0.254 & 0.206 \\
\midrule
\multirow{5}{*}{KIVI 4-bit} & 0.1 & 0.190 & 0.196 & 0.154 & 0.102 & 0.108 & 0.084 & \textbf{0.277} & \textbf{0.287} & \textbf{0.237} \\
 & 0.2 & 0.234 & 0.247 & 0.193 & 0.182 & 0.191 & 0.150 & \textbf{0.309} & \textbf{0.321} & \textbf{0.266} \\
 & 0.3 & 0.277 & 0.289 & 0.237 & 0.239 & 0.251 & 0.200 & \textbf{0.314} & \textbf{0.327} & \textbf{0.273} \\
 & 0.4 & 0.299 & 0.310 & 0.260 & 0.252 & 0.265 & 0.216 & \textbf{0.326} & \textbf{0.338} & \textbf{0.284} \\
 & 0.5 & \textbf{0.331} & \textbf{0.343} & 0.292 & 0.275 & 0.286 & 0.239 & 0.330 & 0.341 & \textbf{0.293} \\
\midrule
\multirow{5}{*}{OTT 2-bit} & 0.1 & 0.080 & 0.082 & 0.058 & 0.038 & 0.040 & 0.030 & \textbf{0.202} & \textbf{0.209} & \textbf{0.165} \\
 & 0.2 & 0.103 & 0.107 & 0.078 & 0.079 & 0.084 & 0.061 & \textbf{0.200} & \textbf{0.206} & \textbf{0.165} \\
 & 0.3 & 0.113 & 0.118 & 0.090 & 0.110 & 0.116 & 0.085 & \textbf{0.166} & \textbf{0.171} & \textbf{0.137} \\
 & 0.4 & 0.145 & 0.150 & 0.118 & 0.138 & 0.146 & 0.108 & \textbf{0.180} & \textbf{0.185} & \textbf{0.150} \\
 & 0.5 & 0.257 & 0.267 & 0.220 & 0.168 & 0.176 & 0.136 & \textbf{0.264} & \textbf{0.276} & \textbf{0.228} \\
\midrule
\multirow{5}{*}{OTT 4-bit} & 0.1 & 0.196 & 0.204 & 0.160 & 0.100 & 0.106 & 0.081 & \textbf{0.317} & \textbf{0.329} & \textbf{0.275} \\
 & 0.2 & 0.202 & 0.214 & 0.164 & 0.182 & 0.191 & 0.150 & \textbf{0.325} & \textbf{0.337} & \textbf{0.281} \\
 & 0.3 & 0.237 & 0.249 & 0.199 & 0.243 & 0.255 & 0.204 & \textbf{0.323} & \textbf{0.334} & \textbf{0.279} \\
 & 0.4 & 0.280 & 0.291 & 0.241 & 0.258 & 0.270 & 0.220 & \textbf{0.324} & \textbf{0.336} & \textbf{0.281} \\
 & 0.5 & 0.327 & 0.338 & 0.287 & 0.274 & 0.284 & 0.238 & \textbf{0.328} & \textbf{0.339} & \textbf{0.291} \\
\bottomrule
\end{tabular}
\end{table}

\section{Theoretical Analysis}
\label{app:theory}

This section provides the formal statements and proofs supporting the
analysis in \S\ref{sec:theory}. We first establish a deterministic
attention-perturbation bound (\S\ref{lem1}). We then specialize
it to the KV-reuse setting both before (\S\ref{prop1}) and after
(\S\ref{prop2}) probe-guided mean correction, and finally give a
sufficient condition under which correction yields a strictly tighter
high-probability error bound (\S\ref{prop3}).

\vspace{2in}
\subsection{Lemma 1: Deterministic Attention Perturbation Bound}
\label{lem1}
\begin{lem}[Deterministic attention perturbation bound]
Let $q \in \mathbb{R}^{d}$ be a fixed query, and let
$$
k_i^\star,\; v_i^\star \in \mathbb{R}^{d}, \qquad i=1,\dots,n
$$
denote the fresh keys and values for a single attention head. Define
$$
z_i^\star := \frac{q^\top k_i^\star}{\sqrt d}
\qquad
\alpha_i^\star := \frac{\exp(z_i^\star)}{\sum_{m=1}^n \exp(z_m^\star)},
\qquad
y^\star := \sum_{i=1}^n \alpha_i^\star v_i^\star
$$
For arbitrary perturbed keys and values $k_i, v_i \in \mathbb{R}^d$, define
$$
z_i := \frac{q^\top k_i}{\sqrt d}
\qquad
\alpha_i := \frac{\exp(z_i)}{\sum_{m=1}^n \exp(z_m)},
\qquad
y := \sum_{i=1}^n \alpha_i v_i
$$
Assume that $\|v_i^\star\|_2 \le V_{\max}$ for all $i$. Then
$$
\|y-y^\star\|_2
\le
\frac{2V_{\max}\|q\|_2}{2\sqrt d}\max_{1\le i\le n}\|k_i-k_i^\star\|_2
+
\max_{1\le i\le n}\|v_i-v_i^\star\|_2
$$
\end{lem}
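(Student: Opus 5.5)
The plan is to split the output error into a value-perturbation part and an attention-weight part, and to bound the second through a path-integral estimate on the softmax Jacobian. Write $\Delta_i := z_i - z_i^\star$ and decompose
\begin{equation*}
y - y^\star = \sum_{i=1}^n \alpha_i (v_i - v_i^\star) + \sum_{i=1}^n (\alpha_i - \alpha_i^\star) v_i^\star .
\end{equation*}
The first term is handled at once. The $\alpha_i$ are nonnegative and sum to one, so its norm is at most $\max_i \|v_i - v_i^\star\|_2$. This gives the second summand of the claimed bound with constant $1$, which matches $C_V = 1$.

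For the second term I will not bound $\|\alpha - \alpha^\star\|_1$ crudely, since that appears to lose a factor of $2$. Instead I interpolate the logits, $z(t) = z^\star + t\Delta$ for $t\in[0,1]$, and let $\alpha(t) = \mathrm{softmax}(z(t))$. The softmax Jacobian is $\mathrm{diag}(\alpha) - \alpha\alpha^\top$, so
\begin{equation*}
\frac{d}{dt}\alpha_i(t) = \alpha_i(t)\bigl(\Delta_i - \bar\Delta(t)\bigr), \qquad \bar\Delta(t) := \sum_j \alpha_j(t)\Delta_j .
\end{equation*}
Integrating over $t$ and applying the triangle inequality under the integral gives
\begin{equation*}
\Bigl\|\sum_i (\alpha_i - \alpha_i^\star) v_i^\star\Bigr\|_2 \le V_{\max}\sup_{t\in[0,1]} \sum_i \alpha_i(t)\,\bigl|\Delta_i - \bar\Delta(t)\bigr| .
\end{equation*}

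The key step, and the one I expect to be the main obstacle for getting the sharp constant, is bounding the inner quantity. It is the mean absolute deviation of a random variable taking values $\Delta_i$ with probabilities $\alpha_i(t)$. A random variable supported on an interval of length $R = \max_i\Delta_i - \min_i\Delta_i$ has mean absolute deviation at most $R/2$. This follows from $E|X - EX| \le \sqrt{\mathrm{Var}(X)} \le R/2$, or directly from the fact that the two-point distribution at the endpoints is extremal. Hence the inner quantity is at most $\tfrac12(\max_i \Delta_i - \min_i\Delta_i) \le \max_i|\Delta_i|$.

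Finally, Cauchy--Schwarz gives $|\Delta_i| = |q^\top(k_i - k_i^\star)|/\sqrt d \le \|q\|_2\|k_i - k_i^\star\|_2/\sqrt d$. Combining the pieces, the attention-weight term is at most $\tfrac{2V_{\max}\|q\|_2}{2\sqrt d}\max_i\|k_i - k_i^\star\|_2$. Adding the value term yields the stated inequality.
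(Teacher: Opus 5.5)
Your proof is correct and yields exactly the stated constant $\tfrac{2V_{\max}\|q\|_2}{2\sqrt d}=V_{\max}\|q\|_2/\sqrt d$. Its skeleton matches the paper's: the same split of $y-y^\star$, the same bound on the value term, and the same Cauchy--Schwarz step at the end. The softmax step is genuinely different, though, and here your route is the sound one.

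The paper bounds the weight term by $V_{\max}\|\alpha-\alpha^\star\|_1$. It then cites the $1/2$-Lipschitz property of softmax to claim $\|\alpha-\alpha^\star\|_1\le\tfrac12\|\delta\|_\infty$. This gives a final constant of $V_{\max}\|q\|_2/(2\sqrt d)$, half of what the statement asserts.

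The cited result holds for the same $\ell_p$ norm on both sides, not for the mixed map from $\ell_\infty$ to $\ell_1$, where the right constant is $1$. Take $n=2$, $z^\star=(0,0)$ and $\delta=(\epsilon,-\epsilon)$: then $\|\alpha-\alpha^\star\|_1=\tanh\epsilon$. Add $v_1^\star=-v_2^\star$ of norm $V_{\max}$ with unperturbed values. This gives $\|y-y^\star\|_2=V_{\max}\tanh\epsilon$, while the paper's final bound is only $V_{\max}\epsilon/2$. So the paper's displayed conclusion fails for small $\epsilon$, and the $C_K$ used downstream would need to be doubled. The statement's bound $V_{\max}\epsilon$ does hold and is asymptotically tight.

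Your path-integral identity for $\tfrac{d}{dt}\alpha_i(t)$, combined with the mean-absolute-deviation (Popoviciu) estimate, derives the correct constant directly without citing any Lipschitz result. In effect, your argument is what actually proves the lemma as stated.

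One correction to your framing: going through $\|\alpha-\alpha^\star\|_1$ does not by itself lose a factor of $2$. Your own integral gives $\|\alpha-\alpha^\star\|_1\le\int_0^1\sum_i\alpha_i(t)|\Delta_i-\bar\Delta(t)|\,dt\le\max_i|\Delta_i|$. Pulling $V_{\max}$ out first therefore yields the same result. The only question is which constant one uses for that $\ell_1$ distance, and the paper used one that is too small by $2$.
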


\textit{Proof. } 
Write out the gap between clear attention output and perturbed attention output:
$$
y-y^\star
=
\sum_{i=1}^n \alpha_i v_i-\sum_{i=1}^n \alpha_i^\star v_i^\star
=
\sum_{i=1}^n (\alpha_i-\alpha_i^\star)v_i^\star
+
\sum_{i=1}^n \alpha_i (v_i-v_i^\star)
$$
Hence, by the triangle inequality,
$$
\|y-y^\star\|_2
\le
\left\|\sum_{i=1}^n (\alpha_i-\alpha_i^\star)v_i^\star\right\|_2
+
\left\|\sum_{i=1}^n \alpha_i (v_i-v_i^\star)\right\|_2
$$
For the first term, using $\|v_i^\star\|_2 \le V_{\max}$,
$$
\left\|\sum_{i=1}^n (\alpha_i-\alpha_i^\star)v_i^\star\right\|_2
\le
\sum_{i=1}^n |\alpha_i-\alpha_i^\star|\,\|v_i^\star\|_2
\le
V_{\max}\|\alpha-\alpha^\star\|_1
$$
For the second term, since $\alpha_i \ge 0$ and $\sum_i \alpha_i=1$,
$$
\left\|\sum_{i=1}^n \alpha_i (v_i-v_i^\star)\right\|_2
\le
\sum_{i=1}^n \alpha_i \|v_i-v_i^\star\|_2
\le
\max_{1\le i\le n}\|v_i-v_i^\star\|_2
$$
Therefore,
$$
\|y-y^\star\|_2
\le
V_{\max}\|\alpha-\alpha^\star\|_1
+
\max_{1\le i\le n}\|v_i-v_i^\star\|_2
$$
Next, define the logit perturbations
$$
\delta_i := z_i-z_i^\star
=
\frac{q^\top (k_i-k_i^\star)}{\sqrt d}.
$$
Use the fact that softmax is $1/2$-Lipschitz~\citep{nair2025softmax},
$$
\|\alpha-\alpha^\star\|_1 \le \frac12 \|\delta\|_\infty
$$
Moreover,
$$
|\delta_i|
=
\left|\frac{q^\top (k_i-k_i^\star)}{\sqrt d}\right|
\le
\frac{\|q\|_2\,\|k_i-k_i^\star\|_2}{\sqrt d}
$$
so
$$
\|\delta\|_\infty
\le
\frac{\|q\|_2}{\sqrt d}
\max_{1\le i\le n}\|k_i-k_i^\star\|_2
$$
Combining the above inequalities yields
$$
\|y-y^\star\|_2
\le
\frac{V_{\max}\|q\|_2}{2\sqrt d}\max_{1\le i\le n}\|k_i-k_i^\star\|_2
+
\max_{1\le i\le n}\|v_i-v_i^\star\|_2
$$

\begin{flushright} $\blacksquare$ \end{flushright}

\subsection{Proposition 1: Attention Error Bound Before Correction}
\label{prop1}
\begin{prop}[Attention Error Bound Before Correction]
Again, let $q \in \mathbb{R}^d$ be a fixed query, and define the offsets:
$$
r_i^K := k_i^\star-k_i^{\mathrm{reuse}},
\qquad
r_i^V := v_i^\star-v_i^{\mathrm{reuse}}
$$
and assume the decomposition
$$
r_i^K = \mu_K + \xi_i^K,
\qquad
r_i^V = \mu_V + \xi_i^V
$$
where $\mu_K,\mu_V \in \mathbb{R}^d$ are fixed chunk-level offsets and $\xi_i^K,\xi_i^V$ are mean-zero fluctuations. Define
$$
z_i^{\mathrm{reuse}} := \frac{q^\top k_i^{\mathrm{reuse}}}{\sqrt d},
\qquad
\alpha_i^{\mathrm{reuse}} := \frac{\exp(z_i^{\mathrm{reuse}})}{\sum_{m=1}^n \exp(z_m^{\mathrm{reuse}})},
\qquad
y^{\mathrm{reuse}} := \sum_{i=1}^n \alpha_i^{\mathrm{reuse}} v_i^{\mathrm{reuse}}
$$

We assume boundedness for value vectors $(\|v_i^\star\|_2 \le V_{\max}) \forall i$ and centered token-wise residual fluctuations are sub-Gaussian:
$$
\Pr(\|\xi_i^K\|_2 \ge t) \le 2e^{-t^2/(2\sigma_K^2)},
\qquad
\Pr(\|\xi_i^V\|_2 \ge t) \le 2e^{-t^2/(2\sigma_V^2)}
$$

Then for any $0<\delta<1$, with probability at least $1-\delta$,
$$
\|y^{\mathrm{reuse}}-y^\star\|_2
\le
\frac{V_{\max}\|q\|_2}{2\sqrt d}
\left(\|\mu_K\|_2+\sigma_K\sqrt{2\log\frac{4n}{\delta}}\right)
+
\left(\|\mu_V\|_2+\sigma_V\sqrt{2\log\frac{4n}{\delta}}\right)
$$
\end{prop}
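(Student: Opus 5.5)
We apply Lemma~1 with the perturbed keys and values taken to be the reused ones, $k_i = k_i^{\mathrm{reuse}}$ and $v_i = v_i^{\mathrm{reuse}}$. Then $y = y^{\mathrm{reuse}}$, and the lemma gives the deterministic bound
$$
\|y^{\mathrm{reuse}}-y^\star\|_2 \le \frac{V_{\max}\|q\|_2}{2\sqrt d}\max_i\|r_i^K\|_2 + \max_i\|r_i^V\|_2 ,
$$
because $k_i^{\mathrm{reuse}}-k_i^\star = -r_i^K$ and norms ignore the sign. Only the value vectors $v_i^\star$ need the bound $V_{\max}$, and this holds by assumption. The remaining work is to control the two maxima in high probability.

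By the decomposition $r_i^K=\mu_K+\xi_i^K$ and the triangle inequality, $\max_i\|r_i^K\|_2 \le \|\mu_K\|_2 + \max_i\|\xi_i^K\|_2$, and analogously for $V$. The chunk-level offsets are deterministic, so they contribute the common-bias term directly. We then bound $\max_i\|\xi_i^K\|_2$ with a union bound over the $n$ tokens. We choose the threshold $t_K = \sigma_K\sqrt{2\log(4n/\delta)}$, and the sub-Gaussian tail assumption gives
$$
\Pr\Big(\max_i\|\xi_i^K\|_2\ge t_K\Big) \le n\cdot 2e^{-t_K^2/(2\sigma_K^2)} = 2n\cdot\frac{\delta}{4n} = \frac{\delta}{2}.
$$
The same argument with $t_V=\sigma_V\sqrt{2\log(4n/\delta)}$ gives failure probability $\delta/2$ for the values. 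A final union bound over the $K$ and $V$ events shows that both maxima are controlled simultaneously with probability at least $1-\delta$. This explains the factor $4n$: $2$ comes from the tail constant and $2$ from splitting $\delta$ between $K$ and $V$. No independence across tokens is needed, since only union bounds are used.

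Substituting these bounds into the deterministic inequality gives the stated result. It also reproduces the form $B+N_n$ in the main text, with $C_K = V_{\max}\|q\|_2/(2\sqrt d)$ and $C_V=1$.

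The main point needing care is the constant in Lemma~1. Its statement has the prefactor $2V_{\max}\|q\|_2/(2\sqrt d)$, but its final line derives $V_{\max}\|q\|_2/(2\sqrt d)$ from the $1/2$-Lipschitz softmax bound. I would use the version as derived, which matches Proposition~1; with the weaker constant the key term would only double. A second minor subtlety is that the tail bound is applied to norms of vectors rather than to scalars. Since the assumption is stated directly on $\|\xi_i\|_2$, no dimension factor enters, and the union bound goes through as written.
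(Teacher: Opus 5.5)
\textbf{Gap: the Lemma~1 constant.} Your structure matches the paper's own proof: Lemma~1 applied to the reused keys and values, the triangle inequality to split off $\|\mu_K\|_2$ and $\|\mu_V\|_2$, and a union bound over tokens with $\delta/2$ for each of $K$ and $V$. That probability bookkeeping is correct. The problem is the point you flagged and then resolved the wrong way.

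The prefactor $V_{\max}\|q\|_2/(2\sqrt d)$ comes from $\|\alpha-\alpha^\star\|_1\le\frac12\|\delta\|_\infty$, and this inequality is false. The $\frac12$-Lipschitz theorem for softmax uses the same $\ell_p$ norm on input and output; it does not give an $\ell_\infty\to\ell_1$ bound. For a concrete failure, take $n=2$, $d=1$, $q=1$, $k_1^\star=k_2^\star=0$, $v_1^\star=-v_2^\star=V$, and perturbed keys $k_1=-k_2=\epsilon$ with values unchanged. Then $y^\star=0$ and $y=V\tanh\epsilon$, while the derived Lemma~1 bound is only $V\epsilon/2$.

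What is true is $\|\alpha-\alpha^\star\|_1\le\frac12(\max_i\delta_i-\min_i\delta_i)\le\|\delta\|_\infty$. The softmax Jacobian maps $x$ to $(\alpha_i(x_i-\bar x))_i$ with $\bar x=\sum_j\alpha_j x_j$. Its $\ell_1$ norm is a mean absolute deviation, which is at most half the range of $x$. So the lemma's \emph{stated} constant $V_{\max}\|q\|_2/\sqrt d$ is the valid one. Your argument therefore proves Proposition~1 only with $C_K$ doubled, which is exactly your fallback remark.

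\textbf{The stated bound is false.} This cannot be repaired within the statement, because Proposition~1 as written does not hold. The paper's proof breaks at the same step, since its Lemma~1 derivation misapplies the Lipschitz bound in the same way.

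Here is a counterexample in the configuration above.
\begin{itemize}
\item Set $\mu_K=\mu_V=0$ and $\xi_i^V=0$.
\item Set $\xi_1^K=-\xi_2^K=s_0R$, with $R$ a Rademacher sign and $s_0=\sigma_K\sqrt{2\log 2}$. These fluctuations are mean zero and satisfy the stated tail bound for $t\ge 0$.
\item Then $\|y^{\mathrm{reuse}}-y^\star\|_2=V\tanh s_0$ for every outcome.
\item The claimed bound equals $\frac V2\sigma_K\sqrt{2\log(8/\delta)}+\sigma_V\sqrt{2\log(8/\delta)}$.
\end{itemize}
For any $\delta>\frac12$ and sufficiently small $\sigma_K,\sigma_V>0$, the claim therefore fails with probability one.

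To cover every $\delta\in(0,1)$, replace $s_0$ by an independent random magnitude. Let it equal $\sigma_K\sqrt{2\log(2/p)}$ with probability $p$ and $0$ otherwise, where $\delta<p\le 1$ and $p^4<2\delta$. This yields a violation with probability $p>\delta$.

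The oscillation form also shows that the common logit shift $q^\top\mu_K/\sqrt d$ cancels inside the softmax. A correct version of the bound thus needs no $\|\mu_K\|_2$ term: the key contribution is at most $\frac{V_{\max}\|q\|_2}{\sqrt d}\max_i\|\xi_i^K\|_2$.
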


\textit{Proof. }
Apply Lemma~1 with
$$
k_i = k_i^{\mathrm{reuse}},
\qquad
v_i = v_i^{\mathrm{reuse}}
$$
Since
$$
k_i-k_i^\star = -r_i^K,
\qquad
v_i-v_i^\star = -r_i^V
$$
Lemma~1 gives
$$
\|y^{\mathrm{reuse}}-y^\star\|_2
\le
\frac{V_{\max}\|q\|_2}{2\sqrt d}\max_{1\le i\le n}\|r_i^K\|_2
+
\max_{1\le i\le n}\|r_i^V\|_2
$$
Using
$$
r_i^K=\mu_K+\xi_i^K,
\qquad
r_i^V=\mu_V+\xi_i^V
$$
The triangle inequality yields
$$
\max_{1\le i\le n}\|r_i^K\|_2
\le
\|\mu_K\|_2+\max_{1\le i\le n}\|\xi_i^K\|_2,
$$
$$
\max_{1\le i\le n}\|r_i^V\|_2
\le
\|\mu_V\|_2+\max_{1\le i\le n}\|\xi_i^V\|_2
$$
By the sub-Gaussian tail bound and a union bound,
$$
\Pr\!\left(\max_{1\le i\le n}\|\xi_i^K\|_2 \ge t\right)
\le
\sum_{i=1}^n \Pr(\|\xi_i^K\|_2 \ge t)
\le
2n e^{-t^2/(2\sigma_K^2)}
$$
Setting the right-hand side equal to $\delta/2$ gives
$$
t=\sigma_K\sqrt{2\log\frac{4n}{\delta}}
$$
so with probability at least $1-\delta/2$,
$$
\max_{1\le i\le n}\|\xi_i^K\|_2
\le
\sigma_K\sqrt{2\log\frac{4n}{\delta}}
$$
Similarly, with probability at least $1-\delta/2$,
$$
\max_{1\le i\le n}\|\xi_i^V\|_2
\le
\sigma_V\sqrt{2\log\frac{4n}{\delta}}
$$
Intersecting these two events gives a probability of at least $1-\delta$. Substituting into the previous bound yields
$$
\|y^{\mathrm{reuse}}-y^\star\|_2
\le
\frac{V_{\max}\|q\|_2}{2\sqrt d}
\left(\|\mu_K\|_2+\sigma_K\sqrt{2\log\frac{4n}{\delta}}\right)
+
\left(\|\mu_V\|_2+\sigma_V\sqrt{2\log\frac{4n}{\delta}}\right)
$$

\begin{flushright} $\blacksquare$ \end{flushright}

\subsection{Proposition 2: Error Bound After Probe-Guided Mean Correction}
\label{prop2}
\begin{prop}[Error Bound After Probe-Guided Mean Correction]
Let the setup and assumptions of Proposition~1 hold. Let $S$ denote $b$ probe samples drawn independently with replacement from the chunk residual population, and define the probe mean estimators

$$
\hat\mu_K := \frac{1}{b}\sum_{j\in S} r_j^K,
\qquad
\hat\mu_V := \frac{1}{b}\sum_{j\in S} r_j^V
$$
For all tokens, define the corrected keys and values by
$$
k_i^{\mathrm{corr}} := k_i^{\mathrm{reuse}}+\hat\mu_K
\qquad
v_i^{\mathrm{corr}} := v_i^{\mathrm{reuse}}+\hat\mu_V
$$
Define the corrected residuals
$$
r_i^{K,\mathrm{corr}} := k_i^\star-k_i^{\mathrm{corr}}
\qquad
r_i^{V,\mathrm{corr}} := v_i^\star-v_i^{\mathrm{corr}}
$$
Let
$$
e_K := \hat\mu_K-\mu_K,
\qquad
e_V := \hat\mu_V-\mu_V
$$
Then for all $i$,
$$
r_i^{K,\mathrm{corr}} = \xi_i^K-e_K,
\qquad
r_i^{V,\mathrm{corr}} = \xi_i^V-e_V
$$
Define
$$
z_i^{\mathrm{corr}} := \frac{q^\top k_i^{\mathrm{corr}}}{\sqrt d},
\qquad
\alpha_i^{\mathrm{corr}} := \frac{\exp(z_i^{\mathrm{corr}})}{\sum_{m=1}^n \exp(z_m^{\mathrm{corr}})},
\qquad
y^{\mathrm{corr}} := \sum_{i=1}^n \alpha_i^{\mathrm{corr}} v_i^{\mathrm{corr}}
$$
Then for any $0<\delta<1$, with probability at least $1-\delta$,
$$
\|y^{\mathrm{corr}}-y^\star\|_2
\le
\frac{V_{\max}\|q\|_2}{2\sqrt d}
\left(
\sigma_K\sqrt{2\log\frac{8n}{\delta}}
+
\sigma_K\sqrt{\frac{2\log(8/\delta)}{b}}
\right)
+
\left(
\sigma_V\sqrt{2\log\frac{8n}{\delta}}
+
\sigma_V\sqrt{\frac{2\log(8/\delta)}{b}}
\right)
$$
\end{prop}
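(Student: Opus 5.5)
The plan mirrors the proof of Proposition~1: reduce to Lemma~1, then control the maxima of the residual norms with a union bound. The new ingredient is a concentration step for the probe-mean errors $e_K, e_V$.

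\textbf{Step 1 (algebraic identity).} By definition, $k_i^{\mathrm{corr}} = k_i^{\mathrm{reuse}} + \hat\mu_K$. Hence
\[
r_i^{K,\mathrm{corr}} = k_i^\star - k_i^{\mathrm{reuse}} - \hat\mu_K = r_i^K - \hat\mu_K = \mu_K + \xi_i^K - \hat\mu_K = \xi_i^K - e_K,
\]
and the same computation gives $r_i^{V,\mathrm{corr}} = \xi_i^V - e_V$.

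\textbf{Step 2 (reduce to Lemma~1).} Apply Lemma~1 with $k_i = k_i^{\mathrm{corr}}$ and $v_i = v_i^{\mathrm{corr}}$. By the triangle inequality,
\[
\|y^{\mathrm{corr}} - y^\star\|_2 \le C_K\Bigl(\max_i\|\xi_i^K\|_2 + \|e_K\|_2\Bigr) + \Bigl(\max_i\|\xi_i^V\|_2 + \|e_V\|_2\Bigr).
\]
This leaves four random quantities to control. We split the failure budget $\delta$ into four equal parts of $\delta/4$ each.

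\textbf{Step 3 (fluctuation maxima).} Take the union bound over $n$ tokens with the tail $2e^{-t^2/(2\sigma_K^2)}$. Setting $2n e^{-t^2/(2\sigma_K^2)} = \delta/4$ gives $t = \sigma_K\sqrt{2\log(8n/\delta)}$. The same choice handles $\xi^V$ with $\sigma_V$ in place of $\sigma_K$.

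\textbf{Step 4 (probe error).} Since the probes are drawn with replacement from the chunk population, $e_K = \frac1b\sum_{j\in S}\xi_j^K$. I will treat this as an average of $b$ i.i.d.\ mean-zero sub-Gaussian vectors, so it is sub-Gaussian with scale $\sigma_K/\sqrt b$. The target tail is
\[
\Pr\bigl(\|e_K\|_2 \ge t\bigr) \le 2e^{-b t^2/(2\sigma_K^2)}.
\]
Setting the right-hand side to $\delta/4$ gives $t = \sigma_K\sqrt{2\log(8/\delta)/b}$, and likewise for $e_V$.

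\textbf{Main obstacle.} Step 4 is where I expect the difficulty. The hypothesis only bounds the tails of $\|\xi_i\|_2$ with constant $2$. A norm-sub-Gaussian average in $\mathbb{R}^d$ generally picks up extra absolute constants, or dimension-dependent factors from a net argument, before it reaches that clean tail. I would proceed as follows:
\begin{itemize}
\item Invoke a vector Hoeffding bound for norm-sub-Gaussian random vectors, such as that of Jin et al. If needed, interpret $\sigma_K, \sigma_V$ as absorbing the resulting absolute constants, in line with the paper's phrasing of ``sub-Gaussian scales''.
\item Treat independence of the probe draws as part of the sampling model stated in Proposition~2.
\end{itemize}

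\textbf{Conclusion.} Intersect the four events from Steps 3 and 4, which together hold with probability at least $1-\delta$. Substituting their bounds into the inequality of Step 2 yields the stated bound.
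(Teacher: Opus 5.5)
Your route is the paper's proof, step for step. The steps are the identity $r_i^{K,\mathrm{corr}}=\xi_i^K-e_K$, Lemma~1 plus the triangle inequality, and a four-way split of $\delta$. The union bound over the $n$ tokens then gives $\sqrt{2\log(8n/\delta)}$, and the tail $\Pr(\|e_K\|_2\ge t)\le 2e^{-bt^2/(2\sigma_K^2)}$ gives $\sqrt{2\log(8/\delta)/b}$. The paper handles your Step~4 in one line, by appeal to ``standard concentration of sample means,'' so it does not resolve the obstacle you flag.

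That obstacle is a genuine gap in both arguments. Under the norm-tail hypothesis, the displayed inequality is not only unproved but false in general, already for $d=1$. Take $\xi=\pm s$ with equal probability. The hypothesis holds with $\sigma^2=s^2/(2\ln 2)$, which is smaller than $\mathrm{Var}(\xi)=s^2$. By the CLT the tails of $\sqrt b\,\bar\xi$ approach those of $N(0,s^2)$, which are heavier than $2e^{-u^2/(2\sigma^2)}$, so the inequality must fail for large $b$. Concretely, for $b=100$ and $t=s/2$ one has $\Pr(|\bar\xi|\ge t)=2\Pr(\mathrm{Bin}(100,\tfrac12)\ge 75)\approx 5.6\times10^{-7}$, against the claimed $2^{-24}\approx 6\times10^{-8}$.

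Your fallback therefore proves a different statement. The Jin et al.\ bound gives $\|e_K\|_2\le c\,\sigma_K\sqrt{\log(8d/\delta)/b}$ with probability at least $1-\delta/4$, for an absolute constant $c$. Absorbing $c$ and the $\log d$ into $\sigma_K,\sigma_V$ yields the bound with inflated scale parameters, which is weaker than the proposition as written. The displayed constants need one of two things. The first option is an extra hypothesis placed directly on the probe mean: that $e_K,e_V$ obey the norm tail with scale $\sigma/\sqrt b$. This holds, e.g., for Gaussian fluctuations, since then $\bar\xi\overset{d}{=}\xi_1/\sqrt b$. The second option is to restate the proposition with the constants Step~4 actually delivers.

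Step~2 has a second inherited issue. Lemma~1 as \emph{stated} carries the factor $V_{\max}\|q\|_2/\sqrt d=2C_K$. The $C_K$ that you and the paper use comes from the last line of the lemma's proof, which relies on $\|\alpha-\alpha^\star\|_1\le\tfrac12\|\delta\|_\infty$. That step is false: for $n=2$, $z^\star=0$, $\delta=(\epsilon,-\epsilon)$ one gets $\|\alpha-\alpha^\star\|_1=\tanh\epsilon>\epsilon/2$ for $0<\epsilon\le 1$. A $\tfrac12$-Lipschitz bound holds with the same $\ell_p$ norm on input and output, but from $\ell_\infty$ to $\ell_1$ the sharp constant is $1$.
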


\textit{Proof. }
For analytical simplicity, the proposition considers the uniform correction applied to all tokens, including probe tokens. The practical algorithm is at least as favorable, since probe tokens can instead use fresh KV directly.

Since the centered token-wise residuals are mean-zero and sub-Gaussian, and the probe tokens are sampled independently with replacement from the chunk residual population, standard concentration of sample means implies that, for all $t>0$,
$$
\Pr(\|e_K\|_2 \ge t) \le 2e^{-b t^2/(2\sigma_K^2)},
\qquad
\Pr(\|e_V\|_2 \ge t) \le 2e^{-b t^2/(2\sigma_V^2)}
$$

For the corrected keys and values, apply Lemma~1 with
$$
k_i = k_i^{\mathrm{corr}},
\qquad
v_i = v_i^{\mathrm{corr}}
$$
Then
$$
\|y^{\mathrm{corr}}-y^\star\|_2
\le
\frac{V_{\max}\|q\|_2}{2\sqrt d}
\max_{1\le i\le n}\|k_i-k_i^\star\|_2
+
\max_{1\le i\le n}\|v_i-v_i^\star\|_2
$$
Equivalently,
$$
\|y^{\mathrm{corr}}-y^\star\|_2
\le
\frac{V_{\max}\|q\|_2}{2\sqrt d}
\max_{1\le i\le n}\|r_i^{K,\mathrm{corr}}\|_2
+
\max_{1\le i\le n}\|r_i^{V,\mathrm{corr}}\|_2
$$
For all tokens,
$$
r_i^{K,\mathrm{corr}}=\xi_i^K-e_K,
\qquad
r_i^{V,\mathrm{corr}}=\xi_i^V-e_V
$$
Thus, by the triangle inequality,
$$
\max_{1\le i\le n}\|r_i^{K,\mathrm{corr}}\|_2
\le
\max_{1\le i\le n}\|\xi_i^K\|_2+\|e_K\|_2,
$$
$$
\max_{1\le i\le n}\|r_i^{V,\mathrm{corr}}\|_2
\le
\max_{1\le i\le n}\|\xi_i^V\|_2+\|e_V\|_2
$$

By the same union-bound argument as in Proposition~1, with probability at least $1-\delta/4$,
$$
\max_{1\le i\le n}\|\xi_i^K\|_2
\le
\sigma_K\sqrt{2\log\frac{8n}{\delta}},
$$
and with probability at least $1-\delta/4$,
$$
\max_{1\le i\le n}\|\xi_i^V\|_2
\le
\sigma_V\sqrt{2\log\frac{8n}{\delta}}
$$

Next, by the above sample-mean concentration bound, with probability at least $1-\delta/4$,
$$
\|e_K\|_2
\le
\sigma_K\sqrt{\frac{2\log(8/\delta)}{b}}
$$
and with probability at least $1-\delta/4$,
$$
\|e_V\|_2
\le
\sigma_V\sqrt{\frac{2\log(8/\delta)}{b}}
$$

Taking the intersection of these four events gives probability at least $1-\delta$. On this event,
$$
\max_{1\le i\le n}\|r_i^{K,\mathrm{corr}}\|_2
\le
\sigma_K\sqrt{2\log\frac{8n}{\delta}}
+
\sigma_K\sqrt{\frac{2\log(8/\delta)}{b}}
$$
$$
\max_{1\le i\le n}\|r_i^{V,\mathrm{corr}}\|_2
\le
\sigma_V\sqrt{2\log\frac{8n}{\delta}}
+
\sigma_V\sqrt{\frac{2\log(8/\delta)}{b}}
$$
Substituting into the deterministic perturbation bound yields
$$
\|y^{\mathrm{corr}}-y^\star\|_2
\le
\frac{V_{\max}\|q\|_2}{2\sqrt d}
\left(
\sigma_K\sqrt{2\log\frac{8n}{\delta}}
+
\sigma_K\sqrt{\frac{2\log(8/\delta)}{b}}
\right)
+
\left(
\sigma_V\sqrt{2\log\frac{8n}{\delta}}
+
\sigma_V\sqrt{\frac{2\log(8/\delta)}{b}}
\right)
$$

\begin{flushright} $\blacksquare$ \end{flushright}

\subsection{Proposition 3: Sufficient condition for a tighter bound under correction}
\label{prop3}
\begin{prop}[Sufficient condition for a tighter bound under correction]
If proposition 1 and 2 each hold with probability at least $1 - \delta$, then by a union bound, they hold simultaneously with probability at least $1 - 2 \delta$. Define
$$
C_K := \frac{V_{\max}\|q\|_2}{2\sqrt d},
\qquad
C_V := 1
$$
Also define
$$
B := C_K\|\mu_K\|_2 + C_V\|\mu_V\|_2
\qquad \text{(common bias)}
$$
$$
N_n := C_K\sigma_K\sqrt{2\log\frac{4n}{\delta}}
      + C_V\sigma_V\sqrt{2\log\frac{4n}{\delta}}
\qquad \text{(fluctuation term before correction)}
$$
$$
E_b := C_K\sigma_K\sqrt{\frac{2\log(8/\delta)}{b}}
      + C_V\sigma_V\sqrt{\frac{2\log(8/\delta)}{b}}
\qquad \text{(probe estimation term)}
$$
and
$$
N_n' := C_K\sigma_K\sqrt{2\log\frac{8n}{\delta}}
      + C_V\sigma_V\sqrt{2\log\frac{8n}{\delta}}
\qquad \text{(fluctuation term after correction)}
$$

Define the corresponding high-probability upper bounds
$$
U_{\mathrm{reuse}} := B + N_n,
\qquad
U_{\mathrm{corr}} := N_n' + E_b
$$

Then, on the same event of probability at least $1-2\delta$,
$$
\|y^{\mathrm{reuse}}-y^\star\|_2 \le U_{\mathrm{reuse}},
\qquad
\|y^{\mathrm{corr}}-y^\star\|_2 \le U_{\mathrm{corr}}
$$
Consequently,
$$
U_{\mathrm{reuse}} - U_{\mathrm{corr}}
=
B - E_b - (N_n' - N_n)
$$
In particular, if
$$
B > E_b + (N_n' - N_n)
$$
then
$$
U_{\mathrm{corr}} < U_{\mathrm{reuse}}.
$$
That is, the correction method has a strictly tighter high-probability upper bound than uncorrected reuse. Moreover, the gap between the two upper bounds is
$$
\Delta_b := U_{\mathrm{reuse}} - U_{\mathrm{corr}}
= B - E_b - (N_n' - N_n)
$$
\end{prop}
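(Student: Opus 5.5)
The plan is to obtain the result almost entirely by bookkeeping on top of Propositions~1 and~2; no new probabilistic argument is needed beyond a union bound. First I will fix $\delta$ and let $\mathcal{E}_1$ be the event of probability at least $1-\delta$ on which Proposition~1 holds. Let $\mathcal{E}_2$ be the event of probability at least $1-\delta$ on which Proposition~2 holds. Since the two propositions share the same fluctuations $\xi_i^K,\xi_i^V$ but Proposition~2 also involves the probe sample, the events are in general dependent. So I will not multiply probabilities. Instead I will use $\Pr(\mathcal{E}_1^c\cup\mathcal{E}_2^c)\le 2\delta$, which gives $\Pr(\mathcal{E}_1\cap\mathcal{E}_2)\ge 1-2\delta$.

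Second, I will rewrite the right-hand sides of both propositions in the notation $C_K, C_V, B, N_n, N_n', E_b$. For Proposition~1, I distribute $C_K=V_{\max}\|q\|_2/(2\sqrt d)$ and $C_V=1$ over the two parenthesised groups. Regrouping the $\|\mu\|$ terms gives $B$, and regrouping the $\sigma\sqrt{2\log(4n/\delta)}$ terms gives $N_n$, so the bound is exactly $U_{\mathrm{reuse}}$. Proposition~2 is handled the same way: the $\sqrt{2\log(8n/\delta)}$ terms collect into $N_n'$ and the $\sqrt{2\log(8/\delta)/b}$ terms collect into $E_b$, giving $U_{\mathrm{corr}}$. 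Hence on $\mathcal{E}_1\cap\mathcal{E}_2$ both displayed inequalities hold simultaneously.

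Third, the identity $U_{\mathrm{reuse}}-U_{\mathrm{corr}}=B+N_n-N_n'-E_b=B-E_b-(N_n'-N_n)$ is pure algebra. Under the hypothesis $B>E_b+(N_n'-N_n)$, this difference is strictly positive, which gives $U_{\mathrm{corr}}<U_{\mathrm{reuse}}$ and the stated formula for $\Delta_b$.

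The only point needing care, rather than a real obstacle, is interpretation. The strict inequality is between deterministic upper bounds, not between the random errors themselves, so the conclusion claims nothing about $\|y^{\mathrm{corr}}-y^\star\|_2<\|y^{\mathrm{reuse}}-y^\star\|_2$. I will make that explicit. I would also remark that $N_n'-N_n=(C_K\sigma_K+C_V\sigma_V)\bigl(\sqrt{2\log(8n/\delta)}-\sqrt{2\log(4n/\delta)}\bigr)$ is nonnegative. It is bounded by $(C_K\sigma_K+C_V\sigma_V)\log 2/\sqrt{2\log(4n/\delta)}$ via $\sqrt{a+c}-\sqrt a\le c/(2\sqrt a)$, which justifies calling it lower order.
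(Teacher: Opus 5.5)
Your proposal is correct and matches the paper's proof. Both take the event of probability at least $1-2\delta$ given by the union bound, rewrite the bounds of Propositions~1 and~2 as $U_{\mathrm{reuse}}=B+N_n$ and $U_{\mathrm{corr}}=N_n'+E_b$, and read off $U_{\mathrm{reuse}}-U_{\mathrm{corr}}=B-E_b-(N_n'-N_n)$ by algebra. Your extra remarks are also correct and go slightly beyond the paper: the conclusion compares deterministic upper bounds rather than the errors themselves, and $0\le N_n'-N_n\le (C_K\sigma_K+C_V\sigma_V)\log 2/\sqrt{2\log(4n/\delta)}$ quantifies what the paper only calls a lower-order term.
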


\textit{Proof. }
By Proposition~1,
$$
\|y^{\mathrm{reuse}}-y^\star\|_2
\le
C_K\|\mu_K\|_2 + C_V\|\mu_V\|_2
+
C_K\sigma_K\sqrt{2\log\frac{4n}{\delta}}
+
C_V\sigma_V\sqrt{2\log\frac{4n}{\delta}}
$$
which is exactly
$$
\|y^{\mathrm{reuse}}-y^\star\|_2 \le U_{\mathrm{reuse}} = B + N_n
$$

Similarly, by Proposition~2,

$$
\|y^{\mathrm{corr}}-y^\star\|_2
\le
C_K\sigma_K\sqrt{2\log\frac{8n}{\delta}}
+
C_V\sigma_V\sqrt{2\log\frac{8n}{\delta}}
+
C_K\sigma_K\sqrt{\frac{2\log(8/\delta)}{b}}
+
C_V\sigma_V\sqrt{\frac{2\log(8/\delta)}{b}}
$$

which is exactly
$$
\|y^{\mathrm{corr}}-y^\star\|_2 \le U_{\mathrm{corr}} = N_n' + E_b
$$

Subtracting the two \emph{upper bounds} gives
$$
U_{\mathrm{reuse}} - U_{\mathrm{corr}}
=
(B+N_n) - (N_n' + E_b)
=
B - E_b - (N_n' - N_n)
$$
Therefore, if
$$
B > E_b + (N_n' - N_n)
$$
then
$$
U_{\mathrm{corr}} < U_{\mathrm{reuse}}
$$
Hence the corrected method has a strictly tighter high-probability upper bound than uncorrected reuse. The stated expression for $\Delta_b$ is just the difference between these two upper bounds.

\begin{flushright} $\blacksquare$ \end{flushright}

Intuitively, with high probability, the uncorrected reuse error is bounded by
$$
\text{common bias} + \text{token-wise fluctuation},
$$
while the probe-corrected error is bounded by
$$
\text{probe estimation error} + \text{token-wise fluctuation}.
$$
Therefore, probe-guided correction yields a tighter high-probability error bound whenever the shared chunk-level residual bias dominates the error of estimating that bias from the probe set.

\section{Empirical Validation of the Modeling Assumptions}
\label{emp_val}

\subsection{Mean-explained Energy and Change in Top Singular Value of Residual}
\label{mean_explained}

\begin{figure}
    \centering
    \includegraphics[width=1\linewidth]{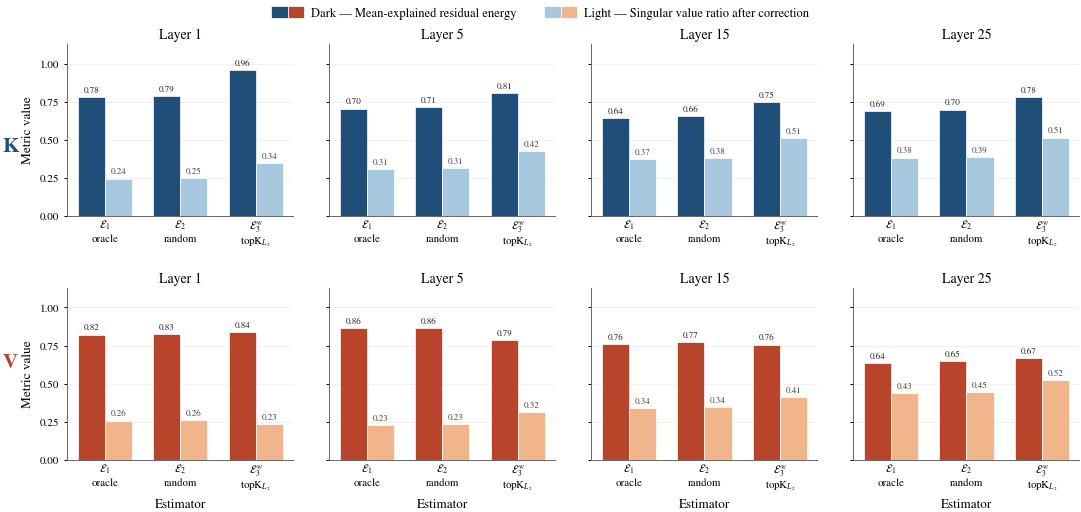}
    \caption{Mean-explained residual energy (dark) and remaining top singular value ratio (light) after correction across layers for Key(top) and Value(bottom), comparing oracle($\mathcal{E}_1$), random($\mathcal{E}_2$), and topK$_{L_2}$($\mathcal{E}_3^w$) correction vectors. We use $b=10\% \text{of tokens}$ for $\mathcal{E}_2$ and $\mathcal{E}_3^w$.} 
    \label{fig:correction}
\end{figure}

Figure~\ref{fig:correction} empirically supports the residual decomposition $r_i=\mu+\xi_i$ underlying our analysis. We compare three correction vectors: $\mathcal{E}_1$ (oracle), which uses the exact chunk residual mean; $\mathcal{E}_2$ (random), which estimates the mean from randomly sampled probe tokens as in our theory; and $\mathcal{E}_3^w$ (topK), our practical heuristic based on the most divergent tokens. Here, mean-explained residual energy measures how much of the total residual energy is removed by the correction vector, while the top singular value ratio after correction measures how much of the dominant structured residual mode remains, with smaller values indicating stronger suppression. Across layers, the oracle and random estimators behave very similarly: for both K and V, they explain a large fraction of residual energy ($86\%\!-\!64\%$) and reduce the top singular value after correction to only about $24\%\!-\!45\%$ of its original magnitude. This indicates that the dominant residual structure is largely a shared chunk-level offset, and that the probe-based estimator used in our theory is already sufficient to recover it in practice. The topK heuristic often explains slightly more residual energy ($96\%\!-\!67\%$), which is better from a KV reuse perspective, while leaving a somewhat larger remaining leading singular mode after correction, suggesting that it captures not only the shared mean offset but also additional structured variation that is beneficial in practice.

\subsection{Empirical Validation of the Sub-Gaussian Fluctuation Assumption}
\label{fluctuation_validation}

The analysis in Section~\ref{sec:theory} models each residual block as a sum of a shared chunk-level offset and a centered token-wise fluctuation term, $r_i=\mu+\xi_i$. While Appendix section~\ref{mean_explained} empirically validates the shared-mean component, propositions~1 to 3 additionally rely on the assumption that the centered fluctuation term is sub-gaussian (sufficiently light-tailed) so that probe-mean estimation error concentrates as the probe budget increases. This appendix subsection evaluates whether the residual blocks observed in practice behave consistently with that concentration-based picture.

Our goal is to test whether the token-wise fluctuations behave in a way consistent with the theory. For our correction analysis, the most important implication is that the probe-mean estimator should become more accurate as the number of probes $b$ increases, ideally at the standard sample-mean rate. A second implication is that, after removing the chunk mean, the remaining token-wise fluctuations should look reasonably light-tailed in one-dimensional projections. To examine these properties, we use the saved per-block residual tensors, where each row corresponds to the residual of one token. We analyze K and V separately. For each block, we first compute the block mean $\bar r$ and define the centered fluctuation block $R-\mathbf{1}\bar r^\top$. We then perform two diagnostics.

First, we study probe-mean error versus probe budget. For each block and each probe budget $b$, we repeatedly sample $b$ tokens with replacement, compute the sample mean $\hat\mu_b$, and measure the estimation error $\|\hat\mu_b-\bar r\|_2$. Aggregating these errors across blocks gives an empirical view of how quickly the probe-estimated correction vector approaches the full block mean as $b$ increases. This directly tests the concentration behavior used in the propositions. Second, we study the distribution of the centered fluctuations through random projection to 1D. For each residual block, we draw random unit directions $u$ and compute the projected fluctuations $u^\top(r_i-\bar r)$. Pooling these projections across blocks yields a one-dimensional view of the fluctuation distribution. We summarize this distribution with QQ plots against a standard normal reference and with tail plots of $\log \Pr(|Z|>t)$ versus $t^2$. These diagnostics do not establish exact sub-Gaussianity, but they reveal whether the centered fluctuations are at least consistent with a light-tailed, approximately sub-Gaussian concentration model. For both experiments and both K and V, the results are shown in Figure~\ref{fig:sub_gaussian_val}.

\begin{figure}
    \centering
    \includegraphics[width=1\linewidth]{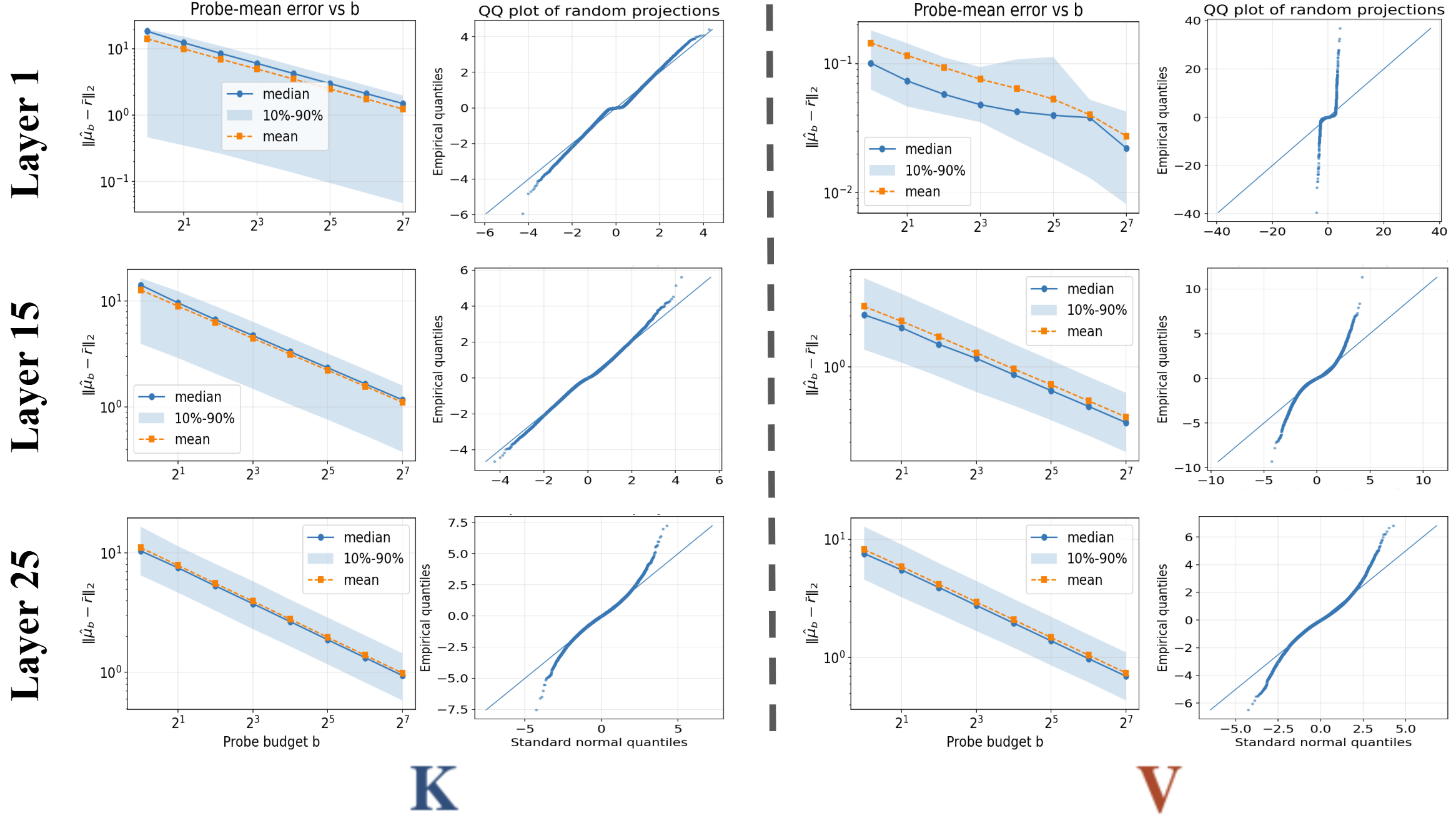}
    \caption{Empirical diagnostics of the token-wise fluctuation term for K (left) and V (right) across Layers~1, 15, and 25. For each layer, the left panel shows probe-mean estimation error versus probe budget $b$. The right panel shows QQ plots of random projections of the centered residuals against a Gaussian reference, where closer alignment with the diagonal indicates behavior more consistent with a light-tailed concentration model.} 
    \label{fig:sub_gaussian_val}
\end{figure}

For K residuals, the empirical diagnostics are strongly consistent with the concentration-based picture underlying propositions~1 to 3. Across Layers~1, 15, and 25, the probe-mean estimation error $\|\hat{\mu}_b-\bar r\|_2$ decreases smoothly as the probe budget $b$ increases and is close to linear on log--log axes, indicating behavior near the standard $b^{-1/2}$ sample-mean rate. In parallel, the QQ plots of centered random projections remain close to the Gaussian reference line, with only moderate deviations in the extreme tails. Taken together, these results suggest that the token-wise fluctuation for K approximates well by a light-tailed concentration model, and they provide strong empirical support for the probe-estimation and token-wise fluctuation assumptions used in our Key side analysis.

For V residuals, the empirical evidence is more mixed than for K, but still broadly supportive of the theory. Across Layers~1, 15, and 25, the probe-mean estimation error decreases monotonically with probe budget $b$, indicating that the probe-estimated correction vector becomes increasingly accurate as more probes are used. Moreover, the QQ plots show a clear depth-wise trend: although Layer~1 exhibits non-Gaussian behavior, the distributions in Layers~15 and 25 move progressively closer to the Gaussian reference line. Thus, while the sub-Gaussian-style assumption is less accurate for V than for K, it becomes increasingly reasonable in deeper layers, where the empirical behavior is more consistent with the concentration-based picture used in the analysis.

Overall, the empirical results here largely validate the modeling assumptions and correction mechanism underlying our analysis, especially for K. At the same time, we note that the residual characteristics of K and V can be different under KV reuse: K is more cleanly aligned with the concentration-based model, while V exhibits a more structured and layer-dependent behavior. This suggests that K and V may benefit from different reuse or correction mechanisms, and points to a promising direction for future work on asymmetric KV correction strategies tailored separately to keys and values.

\section{Societal Impact}
AgentKVShift reduces the prefill cost of memory-augmented LLM agents, which has both positive and neutral societal implications. On the positive side, lower inference latency and compute reduce the energy footprint of long-horizon
LLM serving and broaden access to memory-augmented agents in resource-constrained environments. On the other hand, like other efficient LLM serving research, our work is subject to the dual-use risk that reducing inference cost may lower the barrier to deploying LLM-based agents at scale, including in
applications that raise privacy, fairness, or misuse concerns. We do not anticipate any direct negative societal impact unique to this work beyond those already present in the systems it accelerates.

\section{Limitations and Future Work}
\label{sec:limitations}

While AgentKVShift offers an effective method for KV reuse in agentic memory systems, our work has point to promising directions for future research.

\paragraph{Asymmetry between K and V residuals.}
As discussed in our empirical analysis (Appendix~\ref{fluctuation_validation}), we show that the sub-Gaussian fluctuation assumption used in our theoretical bounds fits keys more cleanly than values, especially in shallow transformer layers. This suggests that asymmetric correction strategies, applying different reuse
mechanisms to K and V or using layer-dependent probe budgets, could further improve quality. We leave a careful study of K-V asymmetric KV correction to future work.

\paragraph{Cross-chunk memory reasoning.}
Our extended results on AMA-Bench (Appendix~\ref{app:full_amabench}) highlight that
all KV reuse methods, including AgentKVShift, leave a wider gap to full recompute on cross-chunk reasoning capabilities such as Causal Inference and State Updating. Since these capabilities depend more on the agent's reasoning
pipeline than on chunk-local fidelity, they are not fully addressed by per-memory correction. A promising direction is state-aware KV reuse, where the correction is conditioned on the agent's current reasoning state rather than treated as purely memory-local.



\end{document}